\definecolor{lightblue}{HTML}{DBF0FF}
\theoremstyle{plain}
\newtheorem{thm}{Theorem}[section]
\theoremstyle{definition}
\theoremstyle{remark}
\tiny\color{gray},   % Line number style
\icmltitlerunning{Long-Short Alignment for Effective Long-Context Modeling in LLMs}
\begin{document}
\setlength{\floatsep}{10pt}
\setlength{\textfloatsep}{10pt}
\setlength{\dbltextfloatsep}{10pt}
\setlength{\topsep}{5pt}
\twocolumn[
\icmltitle{
Long-Short Alignment for Effective Long-Context Modeling in LLMs
}

% It is OKAY to include author information, even for blind
% submissions: the style file will automatically remove it for you
% unless you've provided the [accepted] option to the icml2025
% package.

% List of affiliations: The first argument should be a (short)
% identifier you will use later to specify author affiliations
% Academic affiliations should list Department, University, City, Region, Country
% Industry affiliations should list Company, City, Region, Country

% You can specify symbols, otherwise they are numbered in order.
% Ideally, you should not use this facility. Affiliations will be numbered
% in order of appearance and this is the preferred way.
\icmlsetsymbol{equal}{*}

\icmlsetsymbol{equal}{*}

\begin{icmlauthorlist}
\icmlauthor{Tianqi Du}{equal,yyy}
\icmlauthor{Haotian Huang}{equal,comp}
\icmlauthor{Yifei Wang}{sch}
\icmlauthor{Yisen Wang}{yyy,ai}
\end{icmlauthorlist}

\icmlaffiliation{yyy}{State Key Lab of General Artificial Intelligence, School of Intelligence Science and Technology, Peking University, China}
\icmlaffiliation{comp}{NUS, Singapore}
\icmlaffiliation{sch}{MIT CSAIL, USA}
\icmlaffiliation{ai}{Institute for Artificial Intelligence, Peking University, China}

\icmlcorrespondingauthor{Yisen Wang}{yisen.wang@pku.edu.cn}

% You may provide any keywords that you
% find helpful for describing your paper; these are used to populate
% the "keywords" metadata in the PDF but will not be shown in the document
\icmlkeywords{Machine Learning, ICML}

\vskip 0.3in
]

% this must go after the closing bracket ] following \twocolumn[ ...

% This command actually creates the footnote in the first column
% listing the affiliations and the copyright notice.
% The command takes one argument, which is text to display at the start of the footnote.
% The \icmlEqualContribution command is standard text for equal contribution.
% Remove it (just {}) if you do not need this facility.

%\printAffiliationsAndNotice{}  % leave blank if no need to mention equal contribution
\printAffiliationsAndNotice{\icmlEqualContribution} % otherwise use the standard text.

\begin{abstract}
Large language models (LLMs) have exhibited impressive performance and surprising emergent properties. However, their effectiveness remains limited by the fixed context window of the transformer architecture, posing challenges for long-context modeling. Among these challenges, length generalization---the ability to generalize to sequences longer than those seen during training---is a classical and fundamental problem. In this work, we propose a fresh perspective on length generalization, shifting the focus from the conventional emphasis on input features such as positional encodings or data structures to the output distribution of the model. Specifically, through case studies on synthetic tasks, we highlight the critical role of \textbf{long-short alignment}---the consistency of output distributions across sequences of varying lengths. Extending this insight to natural language tasks, we propose a metric called Long-Short Misalignment to quantify this phenomenon, uncovering a strong correlation between the metric and length generalization performance. Building on these findings, we develop a regularization term that promotes long-short alignment during training. Extensive experiments validate the effectiveness of our approach, offering new insights for achieving more effective long-context modeling in LLMs. Code is available at \url{https://github.com/PKU-ML/LongShortAlignment}.
\end{abstract}

\section{Introduction}

Large language models (LLMs) have demonstrated impressive abilities in various tasks such as natural language generation, reading comprehension, code synthesis, instruction-following, and commonsense reasoning \citep{radford2019language, gpt3, chowdhery2023palm, touvron2023llama}. Their performance has consistently improved by scaling both model and dataset sizes \citep{kaplan2020scaling}. However, the effectiveness of LLMs remains limited by the fixed context window of the Transformer architecture, posing significant challenges for \textit{long-context modeling}. With larger context sizes, a model can benefit from more in-context learning examples, a greater number of reasoning steps, or the ability to generate longer coherent texts \citep{li2024long, huang2023towards, wang2024a}. Nevertheless, training a Transformer with long input sequences is often prohibitively slow and memory-intensive, making it crucial to understand and improve how LLMs generalize to longer contexts.

A classical and foundational subproblem within long-context modeling is \textit{length generalization}—the ability to generalize from shorter training sequences to longer test sequences \citep{anil2022exploring}. This remains a major challenge even for large-scale Transformers \citep{liu2024lost}. Understanding the mechanisms of length generalization is thus an essential step toward achieving robust and efficient long-context modeling.

There exist two dominant approaches to understanding and improving length generalization. The first is to design better positional encodings (PE) \citep{press2021train, su2024roformer, kazemnejad2023impact, pengyarn, chenclex, yang2023longqlora, zhang2024extending}, which help the model systematically encode tokens across a wide range of positions. By reducing the inductive gap between short training sequences and long test sequences, these encodings can partially improve generalization \citep{kazemnejad2023impact}. The second is to analyze the underlying mechanisms of Transformer models \citep{zhou2023algorithms, lee2023teaching, velivckovic2021neural, nogueira2021investigating, deletang2022neural}, such as what algorithms they can simulate or how task design affects generalization. However, we observe that one key aspect has been largely overlooked: the \emph{output space} of the model. In this work, we argue that output behavior plays a central role in determining generalization quality across context lengths.

We begin our analysis with synthetic tasks involving length generalization: predicting the mean value and the length of binary sequences. Both empirical and theoretical results reveal a stark contrast---Transformers generalize well on the mean prediction task but struggle on the length prediction task. The key difference lies in the support set of the output distribution: in the mean prediction task, it remains stable across input lengths, while in the length prediction task, it varies with the sequence length. We hypothesize that this \textbf{misalignment in the output distribution leads to poor generalization} in the latter task. To verify this, we propose a reparameterization technique named OutRep that explicitly aligns the output distributions across lengths. Our analyses confirm that this approach significantly improves generalization, supporting our hypothesis.

Building on this insight, we extend our findings to natural language tasks. Although natural language outputs are vector-valued and more complex than scalar outputs in synthetic tasks, similar misalignment phenomena arise. For instance, two sequences with the same ending but slightly different lengths should ideally produce similar output distributions. However, models that generalize poorly often produce divergent outputs for such inputs. To quantify this phenomenon, we introduce a metric called \textbf{long-short misalignment}, which measures the divergence in output distributions via symmetric cross-entropy. Both empirical and theoretical analyses show that this metric correlates strongly with long-context performance—more so than traditional training loss—making it a reliable indicator of generalization. Motivated by this, we incorporate the long-short misalignment metric as a regularization term during training. Extensive experiments across both synthetic and natural language tasks validate the effectiveness of this approach.

The main contributions of this work are as follows:

\begin{itemize}
    \item We identify the crucial role of output behavior in long-context modeling, with a focus on length generalization. Both empirical and theoretical evidence shows that misalignment in output distributions across input lengths leads to poor generalization.
    
    \item We introduce the \textbf{long-short misalignment} metric to quantify this output discrepancy and demonstrate its strong correlation with length generalization ability.

    \item We incorporate this metric into a novel regularization term. Extensive experiments validate the effectiveness of this approach in boosting long-context modeling performance.
\end{itemize}

\section{Related Work}
\textbf{Length Generalization on Synthetic Tasks.} Our paper is related to the line of work that seeks to understand the capabilities and limitations of Transformer models when it comes to algorithmic reasoning \citep{velivckovic2021neural}. Specifically, we focus on simple tasks and study length generalization on the standard Transformer architecture with causal structure. Related to this, \citet{lee2023teaching} study how well transformers trained from scratch can learn simple arithmetic tasks, and find that no length generalization is observed. \citet{nogueira2021investigating} find that partial length generalization on addition is observed only when models reach 3B parameters and when the addition questions are presented in reverse order. \citet{zhou2023algorithms} proposes the RASP Generalization Conjecture that Transformers tend to learn a length-generalizing solution if there exists a short RASP-L program that works for all input lengths. {\citet{liutransformers} discovers that the Transformer will learn shortcuts through the study of various synthetic tasks.} Besides these explorations on specific tasks, some works study the impact of different positional encodings on the length generalization of math reasoning tasks \citep{press2021train, ontanon2022making, kazemnejad2023impact, ruoss2023randomized}. More details of these works can be viewed in Appendix \ref{appendix-full-related}.

\textbf{Long-context Modeling on Natural Language Tasks.} A series of works \citep{sun-etal-2023-length,chi2022kerple,chi-etal-2023-dissecting,zhang2024extending,chenclex,pengyarn,yang2023longqlora,chen2023extending,fang2025what} aim to extend the context size of Transformer-based models during fine-tuning, primarily by modifying positional encodings. For example, \citet{zhang2024extending} introduces a novel extension to RoPE \citep{su2024roformer} which combines adjusting RoPE’s base frequency and scaling the attention logits to help LLMs efficiently adapt to a larger context window. \citet{chenclex} generalizes the positional encoding scaling approaches to model the continuous dynamics by ordinary differential equations over the length scaling factor.  {\citet{wang-etal-2024-resonance} proposes a novel approach designed to narrow the generalization gap and provides length extrapolation analysis on the feature gap.} Our approach, in contrast, focuses on the model's output space, which identifies the crucial role of long-short alignment in length generalization. 
\begin{figure*}[t]
    \centering
    \begin{subfigure}{.32\textwidth}
    \centering  
    \includegraphics[width=\textwidth]{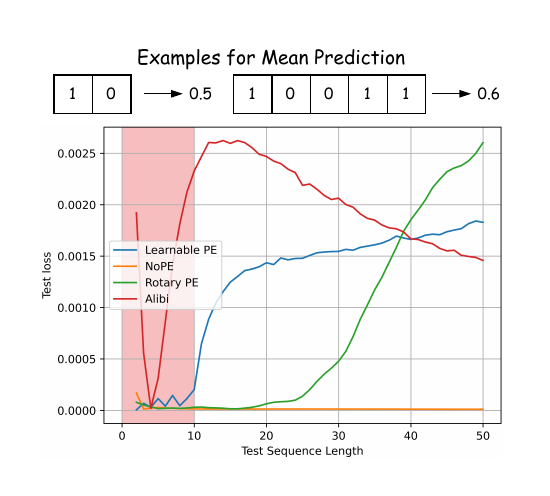}
        \caption{Length generalization in the \textbf{mean} prediction task with different positional encodings}
        \label{fig-mean-task}
    \end{subfigure}
    \hfill
    \begin{subfigure}{.32\textwidth}
    \centering   
    \includegraphics[width=\textwidth]{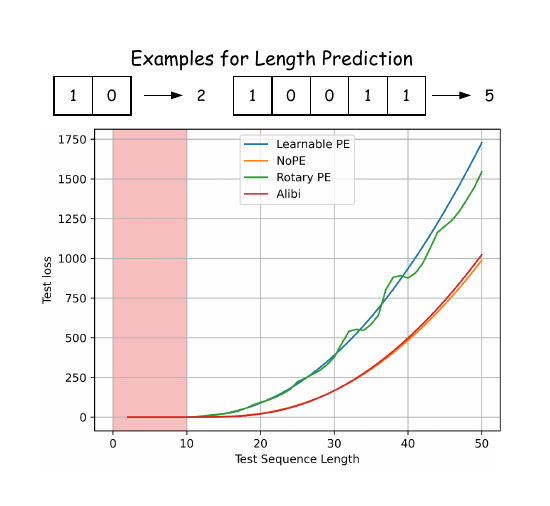}
        \caption{Length generalization in the \textbf{length} prediction task with different positional encodings}
        \label{fig-length-task}  
    \end{subfigure} 
    \hfill
    \begin{subfigure}{.32\textwidth}
    \centering  
    \includegraphics[width=\textwidth]{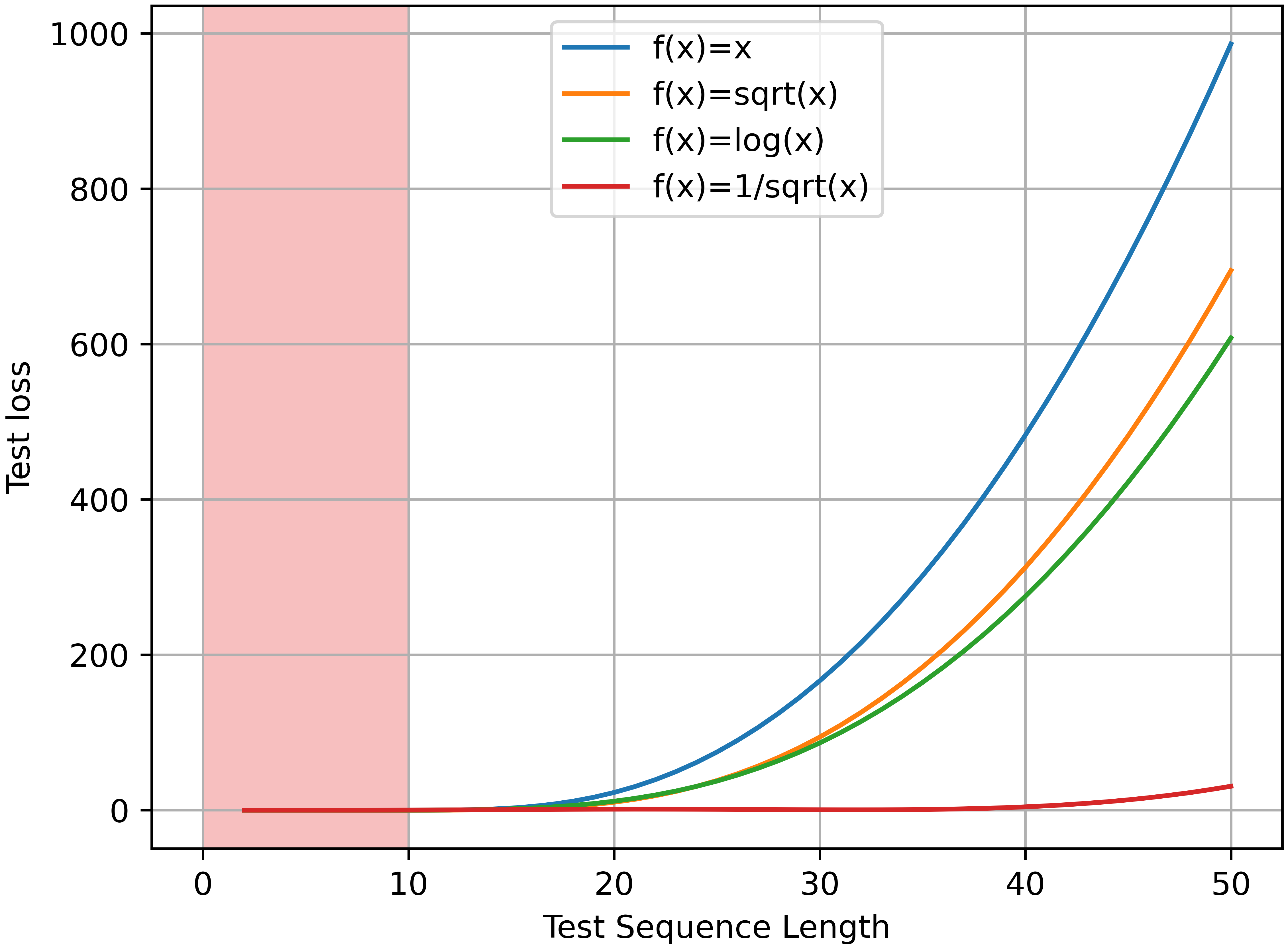}
    \caption{Length generalization in the \textbf{length} prediction task with different reparameterization function.}
        \label{fig-pullback-length}  
    \end{subfigure}
    \caption{Comparison between the length generalization performance in the mean prediction task and the length prediction task. The training sequence length is uniformly selected from $[1, 10]$ (indicated by the {\color{pink}light red} area) while the test sequence lengths (on the x-axis) can reach a maximum of 50. In the length prediction task (b), the model struggles with length generalization. Conversely, the model demonstrates significantly better length generalization in the mean prediction task (a) using NoPE (indicated by the {\color{orange}orange line}). Figure (c) shows the length generalization performance in the length prediction task using different reparameterization functions $f(x)$. All three reparameterized targets improve generalization compared to the origin ({\color{blue}blue}) target. Among them, $f(x)=1/\sqrt{x}$ ({\color{red}red}) performs exceptionally well.}
    \label{fig-comparison}
\end{figure*}

\section{A Case Study on Synthetic Tasks: How Long-Short Alignment Affects Length Generalization?}
\label{sec-case-study}

Long-context modeling aims to extend the reasoning and generation capabilities of language models to longer input sequences \citep{fang2024wrong,wu2025more,kuratov2024babilong,zhang2024extending}. One fundamental challenge in this setting is length generalization---the ability to generalize from shorter training contexts to longer ones. Prior work has identified several factors that influence length generalization, including the task type \citep{zhou2023algorithms,jelassi2023length,nogueira2021investigating} and the design of positional encodings \citep{ontanon2022making,kazemnejad2023impact,ruoss2023randomized}. In this work, we propose a fresh perspective by examining the \textbf{consistency of the model’s output distribution} across inputs of varying lengths, named long-short alignment, identifying it as a crucial yet underexplored factor for effective long-context modeling.

\textbf{Mean Prediction v.s. Length Prediction.} We start from a case study on synthetic tasks: in the mean prediction task, the prediction target is the mean value of the sequence, while in the length prediction task, the target is the length of the sequence. We focus on binary input sequences, where each position in the sequence is filled with $0$ or $1$ with the same probability, and the decoder-only Transformer \citep{vaswani2017attention}, a model widely used in both synthetic tasks \citep{zhou2023algorithms,jelassi2023length} and LLMs \citep{touvron2023llama,pengyarn}, which utilizes a causal mask in the self-attention module to enable auto-regressive generation. More model details can be found in Appendix \ref{appendix-model-detail}. We train the model on sequences with a maximum length of $l_{\mathrm{train}}=10$ and test it on sequences with a maximum length of $l_{\mathrm{test}}=50$. Figure \ref{fig-mean-task} and Figure \ref{fig-length-task} display the test results for both tasks. We observe that regardless of the positional embedding used, the test loss of {the length prediction task} dramatically increases when the test sequence length exceeds $10$, the maximum training length. Furthermore, the test loss continues to rise as the test sequence length grows, indicating that the model demonstrates very \textbf{low length generalization ability in the length prediction task.} In contrast, the model exhibits \textbf{strong generalization ability in the mean prediction task}, as the test loss on longer sequences remains nearly consistent with the loss on shorter sequences. We now provide a theoretical analysis of this observation.
\begin{thm}
{In the length prediction task, the length generalization loss $\mathcal{E}_{\mathrm{length}}(\cdot;\cdot)$ has a quadratic relationship with the predicted length $l_\mathrm{test}$, i.e.,
\begin{equation}
\begin{aligned}
    &\mathcal{E}_{\mathrm{length}}(g_{\theta }^{l_{\mathrm{train}}};l_{\mathrm{test}})\\
    &=\mathbb{E}_{\mathbf{x}_{\mathrm{test}}\in\{0,1\}^{l_{\mathrm{test}}}} \left[\left\lVert g_{\theta }^{l_\mathrm{train}}(\mathbf{x}_\mathrm{test})-y(\mathbf{x}_\mathrm{test}) \right\rVert _{2}^2\right]\\
    &=\mathcal{O}\left((l_{\mathrm{test}}-l_{\mathrm{train}})^{2}\right),
\end{aligned}
\end{equation}
where $g_{\theta}^{l_\mathrm{train}}$ is the model trained on sequences with maximum training length $l_\mathrm{train}$, $\mathbf{x}_\mathrm{test}$ is the testing input with length $l_\mathrm{test}$.

However, in the mean prediction task, the length generalization loss has a fixed upper bound:
\begin{equation}
\mathcal{E}_{\mathrm{mean}}(g_{\theta }^{l_{\mathrm{train}}};l_{\mathrm{test}})=\mathcal{O}(1).
\end{equation}}
\end{thm}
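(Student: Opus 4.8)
The plan is to reduce both bounds to one structural property of the trained network: the representation that feeds the scalar output head has bounded norm (through layer normalization and bounded softmax-weighted value vectors, as specified in the appendix model), so the output $g_\theta^{l_{\mathrm{train}}}(\mathbf{x})$ is confined to a fixed interval $[a,b]$ whose width does not depend on the input length. First I would pin down the two targets: in the mean task $y(\mathbf{x})=\frac{1}{l}\sum_i x_i\in[0,1]$ for every length $l$, while in the length task $y(\mathbf{x})=l$, which is unbounded. The interval $[a,b]$ is calibrated during training to cover the training targets, so for the length task $[a,b]\approx[1,l_{\mathrm{train}}]$ with $b=\mathcal{O}(l_{\mathrm{train}})$, whereas for the mean task $[a,b]\supseteq[0,1]$; this is precisely the length-dependent versus length-invariant \emph{output support} distinction the paper emphasizes.

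For the length task I would argue as follows. Because the output is trapped in $[a,b]$ with $b=\mathcal{O}(l_{\mathrm{train}})$, for any test input of length $l_{\mathrm{test}}>b$ the pointwise error satisfies $|g_\theta^{l_{\mathrm{train}}}(\mathbf{x}_{\mathrm{test}})-l_{\mathrm{test}}|\ge l_{\mathrm{test}}-b$ and $\le l_{\mathrm{test}}-a$. Since $y(\mathbf{x}_{\mathrm{test}})=l_{\mathrm{test}}$ is deterministic in the length, the expectation over $\mathbf{x}_{\mathrm{test}}\in\{0,1\}^{l_{\mathrm{test}}}$ of the squared error is sandwiched between $(l_{\mathrm{test}}-b)^2$ and $(l_{\mathrm{test}}-a)^2$, both of which are $\Theta\big((l_{\mathrm{test}}-l_{\mathrm{train}})^2\big)$ for fixed $l_{\mathrm{train}}$; this yields $\mathcal{E}_{\mathrm{length}}(g_\theta^{l_{\mathrm{train}}};l_{\mathrm{test}})=\mathcal{O}\big((l_{\mathrm{test}}-l_{\mathrm{train}})^2\big)$.

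For the mean task the same interval fact finishes immediately: both $g_\theta^{l_{\mathrm{train}}}(\mathbf{x}_{\mathrm{test}})\in[a,b]$ and $y(\mathbf{x}_{\mathrm{test}})\in[0,1]\subseteq[a,b]$ lie in a fixed bounded set regardless of $l_{\mathrm{test}}$, so the squared error is uniformly bounded by $(b-a)^2$ and $\mathcal{E}_{\mathrm{mean}}(g_\theta^{l_{\mathrm{train}}};l_{\mathrm{test}})=\mathcal{O}(1)$. Here I would also note that the averaging solution---uniform softmax attention over the $\{0,1\}$ values---is exactly length invariant, so the bound reflects genuine generalization rather than a mere clipping artifact.

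The step I expect to be the main obstacle is justifying that the output stays confined to $[a,b]$ with $b=\mathcal{O}(l_{\mathrm{train}})$ rather than extrapolating the count linearly in $l_{\mathrm{test}}$---that is, showing the model cannot represent a length-invariant counter. This requires committing to the concrete architecture and showing that the norm of the pre-head representation is controlled independently of length (normalization bounds the hidden state, and normalized attention cannot produce an unbounded sum), and that this holds across the positional encodings considered so that the bound is genuinely PE-agnostic, consistent with the empirical observation in Figure~\ref{fig-length-task}. Once that boundedness is in hand, both rates follow from the elementary squared-error estimates above.
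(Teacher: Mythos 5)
There is a genuine gap, and it is exactly the one you flag yourself: your entire argument rests on the lemma that the trained model's output is confined to an interval $[a,b]$ that is independent of $l_{\mathrm{test}}$ and calibrated so that $b=\mathcal{O}(l_{\mathrm{train}})$, but you never prove it --- you derive it from architectural features (layer normalization, softmax-convexity of attention) that are not the paper's theoretical model at all, and even granting those features they only give you \emph{some} length-independent bound $M$ on the output, not $M=\mathcal{O}(l_{\mathrm{train}})$. A readout head applied to a bounded hidden state has a range determined by its weights, and nothing in the architecture alone prevents that range from being much larger than the training targets; the statement that the range is ``calibrated during training to cover the training targets'' is a claim about the \emph{trained solution}, i.e., about the optimization problem, and that is precisely the part that requires an argument. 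The paper fills this hole by committing to a concrete abstraction --- linear attention with causal mask, $1/k$ normalization, and bias terms, following \citet{ahnlinear} --- under which the training loss over binary inputs can be solved in closed form: the per-length losses force $q'=q''$ and $\kappa'=\kappa''$, reducing the model to a single effective parameter $\Gamma=q'\kappa'$ whose output is the \emph{constant} $\Gamma$ at every length, and minimizing the averaged training loss gives $\Gamma=(l_{\mathrm{train}}+1)/2$. This yields the exact error $\left(l_{\mathrm{test}}-\tfrac{l_{\mathrm{train}}+1}{2}\right)^{2}$, a much stronger conclusion than your sandwich bound (the output is not merely trapped in an interval; it is length-invariant), and it is obtained by exactly the optimization analysis your proposal defers.

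For the mean task your argument is technically valid but nearly vacuous: once outputs and targets are both bounded, $\mathcal{E}_{\mathrm{mean}}=\mathcal{O}(1)$ holds for any model whatsoever, including one that ignores its input. The paper proves something with content: the same linear-attention family contains a length-invariant solution ($q'\kappa'=1$, $\kappa''=0$) that attains \emph{zero} training loss simultaneously at all lengths, hence zero generalization error, and perturbed solutions $q'\kappa'=1+\varepsilon$ incur error at most $\varepsilon^{2}$. So while your intuition --- length-dependent versus length-invariant output support --- is the right one and matches the paper's message, the proof as proposed reduces to elementary algebra chained to an unproven (and, for the length task, load-bearing) assumption; to close it you would need either the paper's explicit-optimum computation or an equivalent characterization of what training selects within your architecture class, including for each positional encoding you wish to cover.
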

The full statement and the proof are shown in 
Appendix \ref{appendix-theorem1}. From both the empirical and theoretical results, it is evident that while the mean and length of a sequence all convey global information, the model's length generalization ability varies across these tasks. A key distinction lies in the differences in \textbf{output distribution} for each task. In the mean prediction task, where the model generalizes well, the output remains within the fixed range of $[0,1]$, regardless of sequence length. However, in the length prediction task, where generalization is poor, the support set of the output distribution shifts to a single-point set $\{l\}$ as the sequence length increases to $l$. This distinction between the two types of tasks motivates us to consider the importance of long-short alignment for better length generalization ability.

\begin{table*}[t]
    \centering
    \caption{The proposed long-short misalignment metric $\mathcal{L}_{\mathrm{misalign}}$ of models, with their log of perplexity (PPL) on 16k-long contexts and LongBench-E score. We also provide $\mathcal{L}_{\rm{train}}$ as an additional metric in comparison. We find that $\mathcal{L}_{\rm{misalign}}$ correlates better with the long-context benchmark performance.}
    \label{tab:evaluation-short}
    \resizebox{1.8\columnwidth}{!}{
    \centering
    \begin{tabular}{lccccc}
    \toprule
        Model &  $\mathcal{L}_{\mathrm{train}}$ & $\mathcal{L}_{\mathrm{misalign}}$ & $\log(\mathrm{PPL})$ & LongBench-E Score\\
    \midrule
        GPT-J-6B \citep{mesh-transformer-jax} & 2.1 & 3.4 & 9.5 & 7.8\\
        GPT-NeoX-20B \citep{black2022gpt} & 2.3 & 3.2 & 9.4 & 9.7\\
        Llama2-7B \citep{touvron2023llama} & 1.9 & 3.4 & 9.4 & 8.9\\
        {RandomPos \citep{ruoss2023randomized}} & 2.0 & 2.8 & 8.2 & 9.2\\
        Yarn-Llama-2-7B-8k \citep{pengyarn} & 2.0 & 2.6 & 3.8 & 21.2\\
        Qwen-7B-8k \citep{bai2023qwen} & 1.7 & 2.8 & 3.2 & 24.2\\
        {CLEX-LLaMA-4K \citep{chenclex}} & 1.9 & 2.4 & 1.8 & 32.7\\
    \midrule
    \textit{Metric}& & & \multicolumn{2}{l}{\textit{Correlation Coefficient with Metric}} \\
    $\mathcal{L}_{\mathrm{train}}$ & & & 0.62 & -0.55\\
    $\mathcal{L}_{\mathrm{misalign}}$& & & 0.85 & -0.85 \\
    \bottomrule
    \end{tabular}}
\end{table*}

\textbf{Explicit long-short alignment helps length generalization.} We propose output reparameterization (OutRep), a reparameterization technique to explicitly improve long-short alignment in synthetic tasks, thereby enhancing the model's length generalization ability. In the length prediction task, the output distribution for sequences of certain lengths is known. Leveraging this prior knowledge, during training, we apply a reversible function $f:\mathbb{R}\to\mathbb{R}$ to map the support sets of output distributions for sequences of varying lengths into more aligned sets. Instead of using the original target $y(\mathbf{x})$, we train the model on the transformed target $f(y(\mathbf{x}))$. At test time, we apply the reverse function $f^{-1}$ to the output to recover the original prediction. This approach aligns the output distributions across different lengths, which is expected to improve length generalization. We consider the following reparameterization functions: $f(x)=\sqrt{x}$, $f(x)=\log(x)$ and $f(x)=1/\sqrt{x}$. We show the experiment results in Figure \ref{fig-pullback-length}. It can be observed that all three reparameterization functions successfully relieve the poor length generalization ability in the length prediction task. Specifically, the reparameterization function $f(x)=1/\sqrt{x}$ has a nearly perfect generalization ability when the length is no more than $35$. The rising trend when the test sequence length becomes longer is still slow. These results verify our conjecture on the long-short alignment that \textbf{better long-short alignment leads to improved length generalization ability}. We add more theoretical results and discussions in Appendix \ref{appendix-proof} and Appendix \ref{appendix-sum-prediction}. In the next section, we will extend these findings to the more practical natural language tasks.

\section{Long-Short Alignment in Natural Language Tasks}

In the previous section, we observed a positive correlation between length generalization ability and long-short alignment in synthetic tasks. Motivated by this finding, in this section, we aim to extend this investigation to natural language tasks. First, we introduce a metric to quantify long-short alignment in sequence modeling and demonstrate its strong correlation with performance on long-context benchmarks. Building on this insight, we propose incorporating this metric as a regularization term during training to improve long-short alignment, which can lead to the performance gains detailed in Section \ref{sec-experiment}.

\subsection{Long-Short Misalignment: Quantifying the Discrepancy of Output Distributions}

In synthetic tasks, we measure the discrepancy between the support sets of output distributions to capture the differences in output across varying sequence lengths. However, in natural language tasks, the model output is a vector $g_{\theta}(\mathbf{x}) \in \mathbb{R}^{|\mathcal{V}|}$, where the dimension is the size of the vocabulary $|\mathcal{V}|$. This makes it challenging to directly apply the same analysis from synthetic tasks to natural language tasks. Despite this, similar long-short misalignment issues can still be observed in natural language tasks. Specifically, for a sequence $\mathbf{x}$ and its suffixes $\mathbf{x}_{[-l_1:]}$ and $\mathbf{x}_{[-l_2:]}$, {where $l_1$ and $l_2$ are two lengths and $\mathbf{x}_{[-l_i:]}$ means the last $l_i$ tokens of $\mathbf{x}$ ($i=1,2$)}, the model's output is expected to remain consistent when $l_1$ and $l_2$ are similar, because the two suffixes share large overlap in tokens, resulting in similar contextual information. However, we find that models with poor length generalization tend to produce distant output distributions when conditioned on these sequences.

\begin{figure*}[t]
    \centering
    \includegraphics[width=1.0\linewidth]{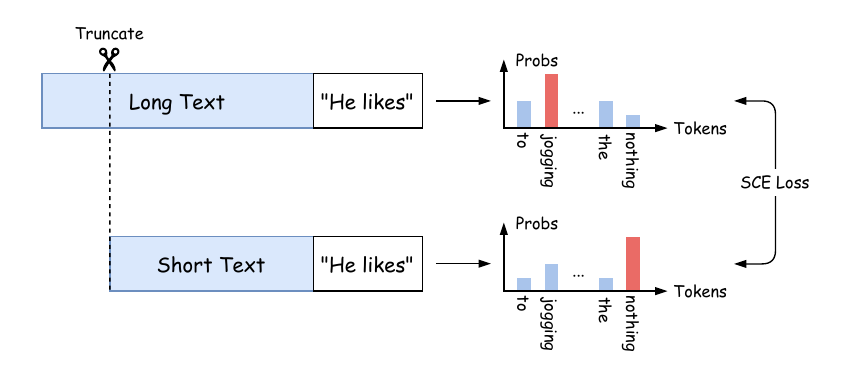}
    % \caption{Illustration of long-short misalignment metric $\mathcal{L}_{\mathrm{misalign}}$.}
    \caption{Illustration of long-short misalignment metric $\mathcal{L}_{\mathrm{misalign}}$. Given two input sequences, where one is a truncated version of the other, the long-short misalignment metric is computed by taking the expectation on Symmetrical Cross-Entropy (SCE) loss \citep{wang2019symmetric} between the model's predictions for these two sequences.}
    \label{fig:illustration}
\end{figure*}

\begin{figure*}[t]
    \centering
    \includegraphics[width=0.8\linewidth]{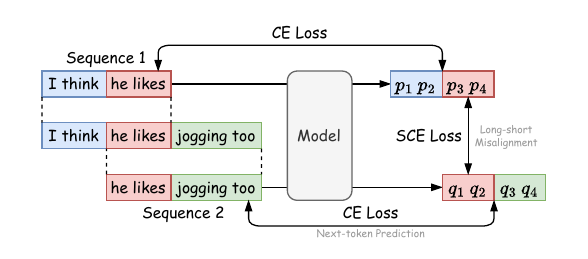}
    \caption{Illustration of efficiently calculating the total training loss $\mathcal{L}^*_{\mathrm{train}}$. This implementation requires only two forward propagations for the two sequences, resulting in minimal additional time and resource costs}
    \label{fig:algorithm}
\end{figure*}

\subsubsection{Metric}

To quantitatively explore the relationship between long-short alignment and length generalization ability, we want to first design a metric to evaluate long-short alignment. We propose to utilize symmetrical cross-entropy (SCE) loss \citep{wang2019symmetric} to measure the divergence between output distributions conditioned on two distinct sequences. Consider two input sequences, $\mathbf{x}$ and $\mathbf{x}'$ with corresponding model predictions $\mathbf{y} = g_{\theta}(\mathbf{x})$ and $\mathbf{y}' = g_{\theta}(\mathbf{x}')$. The SCE loss between these predictions is defined as:
\begin{equation}
    \label{equation-SCE}
    \mathcal{L}_{\mathrm{SCE}}(\mathbf{y}, \mathbf{y}')=-\left(\left<\mathbf{y}',\log(\mathbf{y})\right>+\left<\mathbf{y},\log(\mathbf{y}')\right>\right),
\end{equation}
where $\left<\cdot,\cdot\right>$ denotes the inner product and the $\log$ function is applied element wise. A lower SCE loss between the two predictions indicates better alignment. To assess overall long-short alignment, we compute the expectation over sequence lengths $l_1$ and $l_2$ for a given input $\mathbf{x}$:
\begin{equation}
    \label{equation-align}
    \mathcal{L}_{\mathrm{misalign}}(g_{\theta})=\mathbb{E}_{\mathbf{x},l_1, l_2}\left[\mathcal{L}_{\mathrm{SCE}}(g_{\theta}(\mathbf{x}_{[-l_1:]}), g_{\theta}(\mathbf{x}_{[-l_2:]}))\right].
\end{equation}
We refer to this metric as the \textbf{long-short misalignment}, where a lower value signifies less discrepancy of output distributions across different lengths. An illustration of this metric is shown in Figure \ref{fig:illustration}. In practice, we sample $l_1$ and $l_2$ from the interval $[l_{\mathrm{train}}/2, l_{\mathrm{train}}]$, where $l_{\mathrm{train}}$ represents the maximum context length used during training.

\subsubsection{Results}
To evaluate the model's length generalization ability, we use the perplexity on long validation sets (16k length) and the LongBench-E score \citep{bai2023longbench}. For perplexity evaluation, we select a subset from the RedPajama-Book corpus \citep{together2023redpajama}, following the protocol in \citep{chenclex}. LongBench-E is a multitask benchmark that comprehensively evaluates large language models' ability to understand long contexts, with task lengths averaging between 5k and 32k tokens.

\textbf{Empirical Results.} Table \ref{tab:evaluation-short} shows the long-short misalignment metric $\mathcal{L}_{\mathrm{misalign}}$ for the models along with their corresponding long-context evaluation results. Additionally, we include the training loss $\mathcal{L}_{\mathrm{train}}$ for each model on the RedPajama-Book corpus, which reflects the perplexity on sequences with the maximum training length $l_{\mathrm{train}}$. Interestingly, while the training loss (i.e., log of perplexity on sequences of length $l_{\mathrm{train}}$) shows a moderate correlation with long-context performance metrics---indicating that lower training loss can contribute to improved length generalization---the long-short misalignment metric $\mathcal{L}_{\mathrm{misalign}}$ demonstrates a much stronger correlation with long-context performance, as evidenced by its higher absolute correlation coefficient. 
{These findings suggest that $\mathcal{L}_{\mathrm{misalign}}$ is a promising indicator of length generalization ability. However, it is important to note that we do not claim any causal relationship based solely on these observations. We will elaborate more on this relationship in Section \ref{sec-experiment}.}

{Additionally, we also provide theoretical support for this observation, extending previous work on autoregressive modeling \citep{zhang2024look} with a theorem:}

\begin{thm}[\bfseries \itshape Generalization guarantees for the natural language task]
\label{thm:alignloss}
Under some model assumptions, the generalization error $\mathcal{E}_{\mathrm{gen}}(g_{\theta };l_{\mathrm{test}})$ with testing length $l_\mathrm{test}$ is upper bounded by the sum of training loss $\mathcal{L}_{\mathrm{train}}(g_\theta)$ and misalignment metric $\mathcal{L}_{\mathrm{misalign}}(g_\theta)$, i.e.,
\begin{equation}
\begin{split}
\mathcal{E}_{\mathrm{gen}}(g_{\theta }^{l_\mathrm{train}};l_{\mathrm{test}}) & \leq C_{1}^{(l_{\mathrm{test}})}\mathcal{L}_{\mathrm{misalign}}+ C_{2}^{(l_{\mathrm{test}})} \mathcal{L}_{\mathrm{train}} \\
& + C_{0}^{(l_{\mathrm{test}})},
\end{split}
\end{equation}
where $C_{i}^{(l_{\mathrm{test}})} (i=0,1,2)$ are constants related to $l_{\mathrm{test}}$.

Specifically, each $C_{i}^{(l_{\mathrm{test}})}$ increases as $l_{\mathrm{test}}$ increases, and the ratio $C_{1}^{(l_{\mathrm{test}})} / C_{2}^{(l_{\mathrm{test}})}$ increases as \(l_{\mathrm{test}}\) increases. This indicates that as the testing length increases, the alignment loss becomes increasingly significant.
\end{thm}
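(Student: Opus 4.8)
\textbf{Proof proposal for Theorem~\ref{thm:alignloss}.}

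The plan is to decompose the generalization error $\mathcal{E}_{\mathrm{gen}}(g_\theta^{l_{\mathrm{train}}};l_{\mathrm{test}})$ into a part controlled by the behavior of the model on sequences of the training length and a part controlled by the discrepancy between short-context and long-context predictions. First I would write the generalization error as an expectation over test sequences of length $l_{\mathrm{test}}$ of the per-token autoregressive loss, and then telescope the model's prediction at position $t$ (conditioned on the full prefix of length up to $l_{\mathrm{test}}$) against the prediction conditioned only on the most recent $\approx l_{\mathrm{train}}$ tokens. The point of this telescoping is that the ``short-window'' prediction is exactly the kind of object whose error is measured by $\mathcal{L}_{\mathrm{train}}$, while each step of replacing a longer window by a shorter one incurs a cost bounded by a single SCE term of the form appearing in $\mathcal{L}_{\mathrm{misalign}}$ (Equation~\eqref{equation-align}). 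This reduces the theorem to counting how many such replacement steps are needed and bounding the accumulated cost.

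The key steps, in order, are as follows. (1) State precisely the ``model assumptions'': these should include Lipschitz-type or boundedness control relating the true autoregressive cross-entropy loss to the SCE divergence between predicted distributions, so that a bound in SCE can be converted into a bound on the generalization error; this is the same device used by \citet{zhang2024look} for autoregressive modeling, which I would invoke as the starting inequality. (2) Apply a triangle-inequality / telescoping argument over the $l_{\mathrm{test}}-l_{\mathrm{train}}$ extra context positions, splitting the total error at test length $l_{\mathrm{test}}$ into the training-length error plus a sum of pairwise misalignment terms between consecutive window lengths. (3) Bound each pairwise term by $\mathcal{L}_{\mathrm{misalign}}$ using Equation~\eqref{equation-SCE}, since $\mathcal{L}_{\mathrm{misalign}}$ is precisely the expected SCE between predictions on two suffixes of differing length. (4) Collect the accumulated constants: the number of telescoping steps grows with $l_{\mathrm{test}}$, which yields a coefficient $C_1^{(l_{\mathrm{test}})}$ on $\mathcal{L}_{\mathrm{misalign}}$ that grows with $l_{\mathrm{test}}$, while $C_2^{(l_{\mathrm{test}})}$ on $\mathcal{L}_{\mathrm{train}}$ absorbs the conversion constants and also grows but more slowly, giving the monotone-ratio claim.

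For the final, more delicate assertion---that $C_1^{(l_{\mathrm{test}})}/C_2^{(l_{\mathrm{test}})}$ increases with $l_{\mathrm{test}}$---I would track the $l_{\mathrm{test}}$-dependence of the two coefficients explicitly. The intuition is that the training-loss contribution enters essentially once (the short-window prediction is reused), whereas each additional test position contributes a fresh misalignment penalty, so $C_1^{(l_{\mathrm{test}})}$ accumulates roughly linearly in $l_{\mathrm{test}}-l_{\mathrm{train}}$ while $C_2^{(l_{\mathrm{test}})}$ stays closer to constant or grows sublinearly; the ratio is then monotone. I expect the main obstacle to be step (2): making the telescoping rigorous requires that replacing a context of length $k+1$ by one of length $k$ be genuinely controllable by a single term of the form measured in $\mathcal{L}_{\mathrm{misalign}}$, which in turn depends on the precise distribution over $(l_1,l_2)$ used to define the metric and on whether the SCE divergence satisfies an approximate triangle inequality. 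Since SCE is not a true metric, the cleanest route is likely to bound the cross-context cross-entropy directly rather than chaining SCE terms, and to fold any approximation slack into $C_0^{(l_{\mathrm{test}})}$; pinning down exactly which boundedness assumption on $g_\theta$ makes this go through cleanly is where the real work lies.
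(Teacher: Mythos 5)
Your skeleton matches the paper's proof: chain from $l_{\mathrm{test}}$ down to $l_{\mathrm{train}}$ through intermediate window lengths, bound each chaining step by a misalignment term and the final step by the training loss, then count steps to obtain $l_{\mathrm{test}}$-dependent constants. You also correctly flagged the two sore points — SCE has no triangle inequality, and the chaining must be compatible with the specific $(l_1,l_2)$ distribution defining the metric. The paper resolves the first exactly along the lines you suspected it would be dodged: its ``model assumptions'' replace SCE by the squared $L_2$ distance (and CE by $L_2$, justified as differing only by a constant on normalized outputs), assume the target is invariant under truncation, and take the model to be a normalized linear-attention map $g_{\theta}(\mathbf{x}_{[:k]})=\frac{1}{k}\sum_{i=1}^{k}Q_{k}K_{i}^{\top}V_{i}W$ following \citet{zhang2024look}. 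Chaining squared losses then goes through a Cauchy--Schwarz step, which places an $(N_l+2)$ prefactor on \emph{every} term of the chain, with $N_l=\lfloor (l_{\mathrm{test}}-l_{\mathrm{train}})/l_{\mathrm{extra}}\rfloor$.

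The genuine gap in your plan is your step (3). Your telescoping compares predictions on suffixes of lengths $l^{(k+1)}>l^{(k)}\geq l_{\mathrm{train}}$, but $\mathcal{L}_{\mathrm{misalign}}$ only samples $l_1,l_2\in[l_{\mathrm{train}}/2,\,l_{\mathrm{train}}]$, so these pairs are simply outside the support of the metric and cannot be ``bounded by $\mathcal{L}_{\mathrm{misalign}}$'' by definition. This is where the paper's real work lies: using the explicit linear-attention form, it expands $\lVert g_{\theta}(\mathbf{x}_{[-l^{(k+1)}:]})-g_{\theta}(\mathbf{x}_{[-l^{(k)}:]})\rVert_2^2$, splits off the contribution of the shared older context as a deterministic term bounded by $l_{\mathrm{test}}^2 C_0^2/(4l_{\mathrm{train}}^2)$ (absorbed into $C_0^{(l_{\mathrm{test}})}$), and is left comparing two \emph{truncated} suffixes of lengths $l$ and $l+l_{\mathrm{extra}}$, which do lie in the sampled range. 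Without a structural assumption enabling this reduction, the telescoping never connects to the metric, and your fallback of ``bounding the cross-context cross-entropy directly'' names the difficulty without supplying a mechanism. A secondary inaccuracy: your predicted constant growth rates are wrong. Because the Cauchy--Schwarz prefactor multiplies the training term as well, the paper obtains $C_2^{(l_{\mathrm{test}})}=\mathbb{E}_l[N_l+2]$, which grows roughly linearly in $l_{\mathrm{test}}$ (not near-constant as you guessed — indeed the theorem asserts every $C_i$ increases), while $C_1^{(l_{\mathrm{test}})}\approx\mathbb{E}_l[N_l(N_l+2)]$ grows roughly quadratically; the monotone-ratio claim follows from $\mathbb{E}_l[N_l(N_l+2)]/\mathbb{E}_l[N_l+2]$ increasing in $l_{\mathrm{test}}$, not from the training loss ``entering only once.''
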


\textbf{Proof Sketch.} We decompose the generalization error $\mathcal{E}_{\mathrm{gen}}(g_{\theta };l_{\mathrm{test}})$ of long sequence into the misalignment term between this sequence and a set of shorter sequences, along with the prediction error of the shortest sequence. The prediction error of the shortest sequence is related to the model’s training loss $\mathcal{L}_\mathrm{train}$, while the misalignment term between long and short sequences increases significantly as the testing length grows. This is mainly because the model has a limited capacity to align sequences of different lengths, so aligning longer test sequences requires more intermediate-length sequences, resulting in higher alignment loss. Therefore, reducing the alignment loss can effectively lower the generalization error when testing with longer sequences. \hfill $\square$

The full statement and the proof are shown in Appendix \ref{appendix-theorem2}. This theorem highlights the importance of minimizing both $\mathcal{L}_{\mathrm{misalign}}$ and $\mathcal{L}_{\mathrm{train}}$ to achieve lower generalization error. {Moreover, as the testing length $l_\mathrm{test}$ increases, reducing $\mathcal{L}_{\mathrm{misalign}}$ plays a more critical role in improving generalization performance.} The above empirical and theoretical results motivate us to explicitly optimize the long-short misalignment metric to enhance the model's ability to handle long-context sequences, which will be stated in the following sections.

\subsection{Long-Short Misalignment Metric as Regularization Term}

Since both empirical and theoretical results indicate a strong correlation between the proposed long-short misalignment metric and long-context benchmark performance, we incorporate this metric as a regularization term into the training loss, resulting in the new training loss defined as:
\begin{equation}
    \mathcal{L}^*_{\mathrm{train}}(g_{\theta})=\mathcal{L}_{\mathrm{train}}(g_{\theta})+\alpha\cdot\mathcal{L}_{\mathrm{misalign}}(g_{\theta}),
\label{equ-training-loss}
\end{equation}
where $\mathcal{L}_{\mathrm{train}}$ is the original cross-entropy training loss and $\alpha$ is the regularization coefficient. Calculating these two losses separately during training can be time-consuming, as the computation of $\mathcal{L}_{\mathrm{misalign}}$ requires forward propagation through two distinct sequences. To address this, we propose an efficient implementation for $\mathcal{L}^*_{\mathrm{train}}$. We first sample an integer $l_{\mathrm{extra}}$ from $[1, l_{\mathrm{train}}/2]$ and then sample a sequence of length $l_{\mathrm{train}} + l_{\mathrm{extra}}$. The first $l_{\mathrm{train}}$ tokens form the first input sequence, while the last $l_{\mathrm{train}}$ tokens form the second input sequence. Both sequences can be used to compute $\mathcal{L}_{\mathrm{train}}$. {The overlap between the two sequences starts at token $l_{\mathrm{extra}}+1$ and continues to token $l_{\mathrm{train}}$, resulting in an overlap of $l_{\mathrm{train}}-l_{\mathrm{extra}}$ tokens.} We can calculate the long-short misalignment loss in the overlapping positions. This implementation requires only two forward propagations for the two sequences, resulting in minimal additional time and resource costs compared to calculating the original train loss $\mathcal{L}_{\mathrm{train}}$. A detailed Pytorch-like algorithm is provided in Appendix \ref{appendix-pytorch} and an overall illustration can be found in Figure \ref{fig:algorithm}. We will conduct experiments using the proposed regularization term in the next section to show its effectiveness.
\begin{table}[t]
    \caption{Performance of the fine-tuned models using only cross-entropy loss (baseline) and an additional long-short misalignment loss on long-context modeling benchmark, LongBench-E score \citep{bai2023longbench} and perplexity on the 8k-length validation set. The fine-tuning sequence length is 4k, exactly the same as the training sequence length. The models finetuned with our proposed loss outperform the baseline across different model adaption strategies. }
    \label{tab:length-generalization}
    \resizebox{1.0\columnwidth}{!}{
    \centering
    \begin{tabular}{lcccccc}
    \toprule
       Benchmark  & \multicolumn{3}{c}{LongBench-E ($\uparrow$)} & \multicolumn{3}{c}{Perplexity ($\downarrow$)}\\
       Training steps  &   50 & 100 &  200 & 50 & 100 & 200\\
    \midrule
    \multicolumn{3}{l}{\textit{RedPajama-Book}}\\
    $\mathcal{L}_{\mathrm{train}}$ (Baseline) & 22.7 & 23.8 & 24.7 & 7.21 & 6.56 & 6.12\\
    $\quad+0.1\mathcal{L}_{\mathrm{misalign}}$ (Ours) & \textbf{23.1} & \textbf{25.2} & \textbf{26.6}  & \textbf{6.89} & \textbf{6.24}& \textbf{5.88}\\
    $\quad+0.5\mathcal{L}_{\mathrm{misalign}}$ (Ours) & 21.9 & 23.7 & 24.7 & 7.44 & 7.01 & 6.54\\
    \midrule
    \multicolumn{3}{l}{\textit{PG19}}\\
    $\mathcal{L}_{\mathrm{train}}$ (Baseline) & 20.2 & 21.4 & 22.5 & \textbf{8.92} & \textbf{7.89} & 7.45\\
    $\quad+0.1\mathcal{L}_{\mathrm{misalign}}$ (Ours) & \textbf{20.7} & 22.1  & \textbf{25.3} & 8.95 & 7.92 & \textbf{7.35}\\
    $\quad+0.5\mathcal{L}_{\mathrm{misalign}}$ (Ours) & 20.1 & \textbf{22.2}  & 23.6 & 9.42 & 8.59  & 8.21\\
    \bottomrule
    \end{tabular}}
\end{table}

\section{Experiments on Natural Language Tasks}
\label{sec-experiment}
In this section, we conduct extensive experiments to verify the effectiveness of our proposed length alignment loss. We first examine our proposed training loss in Equation (\ref{equ-training-loss}) on length generalization tasks, where the model is trained on short sequences and tested on longer ones. Additionally, we explore its application in another common scenario: long-context learning, where both training and testing involve long sequences. Then we conduct analytical experiments and ablation studies to further understand the impact of our proposed loss.

\begin{table}[t]
    \caption{Performance of the finetuned models using only cross-entropy loss (baseline) and an additional long-short misalignment loss on long-context modeling benchmark, LongBench-E score \citep{bai2023longbench} and perplexity on the 8k-length validation set. The fine-tuning sequence length is 8k. The models finetuned with our proposed loss outperform the baseline across different model adaption strategies. }
    \label{tab:long-context}
    \resizebox{1.0\columnwidth}{!}{
    \centering
    \begin{tabular}{lcccccc}
    \toprule
       Benchmark  & \multicolumn{3}{c}{LongBench-E ($\uparrow$)} & \multicolumn{3}{c}{Perplexity ($\downarrow$)}\\
       Training steps  &   50 & 100 &  200 & 50 & 100 & 200\\
    \midrule
    \multicolumn{3}{l}{\textit{LongQLora}}\\
    $\mathcal{L}_{\mathrm{train}}$ (Baseline) & \textbf{21.9} & 22.1 &  23.4 & 6.82 & 6.41 & 5.82 \\
    $\quad+0.1\mathcal{L}_{\mathrm{misalign}}$ (Ours) & 21.8 & 23.3 & \textbf{25.8} & \textbf{6.72} & \textbf{6.39} & \textbf{5.77}\\
    $\quad+0.5\mathcal{L}_{\mathrm{misalign}}$ (Ours) & 21.4 & \textbf{23.9} & 25.1 & 7.07 & 6.62 & 5.92\\
    \midrule
    \multicolumn{3}{l}{\textit{EABF}}\\
    $\mathcal{L}_{\mathrm{train}}$ (Baseline) & 22.1 & 22.9 & 23.6 & \textbf{6.89} & 6.52 & 6.01\\
    $\quad+0.1\mathcal{L}_{\mathrm{misalign}}$ (Ours) & \textbf{23.2} & \textbf{24.0} & \textbf{24.8} & 6.92 & \textbf{6.43} & \textbf{5.91}\\
    $\quad+0.5\mathcal{L}_{\mathrm{misalign}}$ (Ours) & 22.5 & 23.2 & 23.9 & 7.14 & 6.78 & 6.34\\
    \bottomrule
    \end{tabular}}
\end{table}

\subsection{Experiments with Training on Short Sequences}
\label{sec:length-generalization}

In this section, we consider the classical length generalization setting, where the model is trained on short sequences (4k-long) and tested on longer sequences (at least 5k-long). Due to the high computational cost of pre-training large language models from scratch, most current methods fine-tune open-sourced pre-trained models \citep{chenclex,yang2023longqlora,pengyarn}. In our experiments, we use Llama2-7b \citep{touvron2023llama} as the base model and apply the CLEX \citep{chenclex} adjustment method. We use two datasets: the RedPajama-Book corpus \citep{together2023redpajama} and PG19 \citep{rae2019compressive}. The experiments are conducted with a context length of 4,096, a batch size of 64, and a maximum of 200 training steps. For the regularization coefficient $\alpha$, we test values of 0.1 and 0.5.

We evaluate performance using the LongBench-E score \citep{bai2023longbench} and perplexity on validation sets made up of sequences of length 8,192 from the corpus of the respective training dataset. {LongBench-E is a multitask benchmark that comprehensively evaluates large language models' ability to understand long contexts, with task lengths averaging between 5k and 32k tokens, which has been adopted by many previous works \citep{chenclex, jinllm} as an effective evaluation metric for long-context modeling.} The results, shown in Table \ref{tab:length-generalization}, indicate that the model fine-tuned with our proposed loss consistently outperforms the baseline model on the LongBench-E benchmark. The fine-tuned model shows lower perplexity on RedPajama-Book and similar perplexity on PG19. These results support the effectiveness of our proposed loss {and the intuition that lower misalignment metric $\mathcal{L}_{\rm{misalign}}$ leads to better length generalization ability}. For the regularization coefficient $\alpha$, we find that larger values do not always improve performance, as they may interfere with the model’s next-word prediction.

% \textbf{Ablation study on the choice of $l_1$ and $l_2$ in $\mathcal{L}_{\mathrm{misalign}}$ (Equation \ref{equation-align}).}

\subsection{Experiments with Training on Longer Sequences}

In this section, we consider the scenario that the model is finetuned on a longer sequence than the training length. We use different model adjustment strategies during the fine-tuning stage, to demonstrate that the proposed length alignment loss can be applied to various long-context fine-tuning methods. Our experiments use Llama2-7b as the base model. For model adjustments, we consider two approaches: LongQLora \citep{yang2023longqlora} and EABF \citep{zhang2024extending}. LongQLora leverages multiple techniques, including Position Interpolation \citep{chen2023extending}, QLoRA \citep{dettmers2024qlora}, and Shift Short Attention from LongLoRA \citep{chen2023longlora}. Meanwhile, EABF introduces a dynamic rescaling mechanism to the attention layers and applies a higher base frequency for RoPE. The experiments are conducted on the RedPajama-Book corpus \citep{together2023redpajama}, with a context length of 8,192, a batch size of 64, and a maximum of 200 training steps. For the regularization coefficient $\alpha$, we test values of $0.1$ and $0.5$.

We evaluate performance using the LongBench-E score and perplexity on validation sets composed of sequences of length 8,192 from the RedPajama-Book corpus. The results are shown in Table \ref{tab:long-context}. Notably, our methods outperform the baseline across both model adjustment strategies. Specifically, models trained with our proposed regularization term achieve up to a $2.4\%$ improvement in the LongBench-E score. Similar to the previous experiments, we observe that excessively large regularization coefficient values may not consistently benefit long-context modeling, which is reflected in the slightly lower LongBench-E scores and higher perplexity, indicating that overly strong regularization may disrupt the model's training process.

\begin{figure}[t]
    \begin{minipage}[t]{0.49\textwidth}
    \makeatletter\def\@captype{table}
    \centering
    \caption{The overall evaluation results on BABILong \citep{kuratov2024babilong} with sequence lengths of 4K, 8K, 16K.}
    \label{tab:babi-overall}
    \resizebox{0.8\columnwidth}{!}{
    \begin{tabular}{lccc}
    \toprule
         & \multicolumn{3}{c}{Evaluation Length} \\
        Training Loss & 4K & 8K & 16K\\
        \midrule
        $\mathcal{L}_{\mathrm{train}}$ (Baseline) & 48.2 & 42.4 & 37.9\\
        $\mathcal{L}_{\mathrm{train}}+0.1\mathcal{L}_{\mathrm{misalign}}$ & \textbf{49.1} & \textbf{44.4} & \textbf{40.1}\\
    \bottomrule
    \end{tabular}
    }
    \end{minipage}
    
    \begin{minipage}[t]{0.49\textwidth}
    \makeatletter\def\@captype{table}
    \centering
    \caption{The evaluation results on BABILong with different locations of the facts in the QA1 task. Input length is 16K.}
    \label{tab:babi-qa1}
    \resizebox{0.8\columnwidth}{!}{
    \begin{tabular}{lcccc}
    \toprule
         & \multicolumn{4}{c}{Fact Depth ($\%$)} \\
        Training Loss & 0 & 25 & 50 & 75\\
        \midrule
        $\mathcal{L}_{\mathrm{train}}$ (Baseline) & \textbf{75} & 64 & 30 & 69 \\
        $\mathcal{L}_{\mathrm{train}}+0.1\mathcal{L}_{\mathrm{misalign}}$ & 73 & 64 & \textbf{38} & \textbf{74}\\
    \bottomrule
    \end{tabular}   
    }
    \end{minipage}

\end{figure}

\subsection{Experiments on BABILong}
{To further assess the ability to retrieve and utilize distant information, we conduct extensive experiments on BABILong \citep{kuratov2024babilong}, a challenging reasoning-in-a-haystack task specifically designed for evaluating long-context capabilities. BABILong comprises question-answering tasks in which the supporting facts for each question are situated at specific positions within the context. Using the model setup described in Section \ref{sec:length-generalization}, which incorporates CLEX as the adjustment method and RedPajama-Book as the training dataset, all models are fine-tuned for 200 steps. The evaluation is performed on input sequences of lengths 4K, 8K, and 16K, with the overall results summarized in Table \ref{tab:babi-overall}. The results indicate that our proposed method consistently outperforms the baseline across all evaluated lengths. Specifically, our method achieves a performance gain of $2.0\%$ at length 8K and $2.2\%$ at length 16K.}

{Additionally, we analyze the impact of the supporting fact's position within the input context using the BABILong QA1 task, where each question is associated with a single supporting fact. The results of this analysis are presented in Table \ref{tab:babi-qa1}, offering two key insights: (1) \textbf{Performance with early-context facts:} When the supporting fact is located at the beginning of the input context (fact depth = 0), our method achieves performance comparable to the baseline. This suggests that despite the form of the regularization potentially encouraging the model to neglect earlier contexts, it does not lead to this behavior in practice. (2) \textbf{Performance with middle-context facts:} When the supporting fact is positioned in the middle of the context (fact depth = 50 or 75), our method shows considerable improvement over the baseline. This indicates that our approach effectively mitigates the "loss-in-the-middle" phenomenon \citep{liu2024lost}, a common challenge in large language models. Together, these results strongly support the effectiveness of our proposed regularization term in enhancing length generalization ability, particularly for tasks requiring attention across diverse positions.}

\begin{table}[t]
    \caption{Ablation study on the regularization coefficient $\alpha$. The setting is the same as Table \ref{tab:length-generalization}. We adopt RedPajama-Book \citep{together2023redpajama} as the training dataset. We find it important to select a moderate value for $\alpha$. }
    \label{tab:ablation-alpha}
    \resizebox{1.0\columnwidth}{!}{
    \centering
    \begin{tabular}{lcccccc}
    \toprule
       Benchmark  & \multicolumn{3}{c}{LongBench-E ($\uparrow$)} & \multicolumn{3}{c}{Perplexity ($\downarrow$)}\\
       Training steps  &   50 & 100 &  200 & 50 & 100 & 200\\
    \midrule
    \multicolumn{3}{l}{\textit{RedPajama-Book}}\\
    $\mathcal{L}_{\mathrm{train}}$ (Baseline) & 22.7 & 23.8 & 24.7 & 7.21 & 6.56 & 6.12\\
    $\quad+0.1\mathcal{L}_{\mathrm{misalign}}$ & 23.1 & 25.2 & 26.6  & \textbf{6.89} & \textbf{6.24}& \textbf{5.88}\\
    $\quad+0.3\mathcal{L}_{\mathrm{misalign}}$ & \textbf{23.4} & \textbf{25.8} & \textbf{27.1}  & 6.95 & 6.35& 5.98\\
    $\quad+0.5\mathcal{L}_{\mathrm{misalign}}$ & 21.9 & 23.7 & 24.7 & 7.44 & 7.01 & 6.54\\
    $\quad+1.0\mathcal{L}_{\mathrm{misalign}}$ & 18.2 & 19.4 & 19.9  & 16.21 & 14.12 & 12.92\\
    \bottomrule
    \end{tabular}}
\end{table}

\begin{table}[t]
    \caption{Ablation study on the sampling range. The setting is the same as Table \ref{tab:long-context}. We adopt RedPajama-Book \citep{together2023redpajama} as the training datasets and LongQLora \citep{yang2023longqlora} as the model adjustment method. We find it important to carefully balance the sampling range to optimize the model's generalization to longer contexts. }
    \label{tab:ablation-range}
    \resizebox{\columnwidth}{!}{
    \centering
    \begin{tabular}{lcccccc}
    \toprule
       Benchmark  & \multicolumn{3}{c}{LongBench-E ($\uparrow$)} & \multicolumn{3}{c}{Perplexity ($\downarrow$)}\\
       Training steps  &   50 & 100 &  200 & 50 & 100 & 200\\
    \midrule
    \multicolumn{3}{l}{Sampling range of $l_{\rm{extra}}$}\\
    (1) $[1, l_{\rm{train}}/2]$ (Current) & \textbf{21.8} & \textbf{23.3} & \textbf{25.8} & \textbf{6.72} & 6.39 & \textbf{5.77}\\
    (2) $[1, l_{\rm{train}}/4]$ & 21.5 & 23.2 & 25.7  & 6.77 & \textbf{6.29} & 5.81\\
    (3) $[l_{\rm{train}}/4, l_{\rm{train}}/2]$ & 21.4 & 22.7 & 24.5  & 6.82 & 6.47 & 5.94\\
    (4) $[1,l_{\rm{train}}]$ & 18.2 & 18.9 & 19.1 & 15.65 & 13.59 & 12.52\\
    \bottomrule
    \end{tabular}}
\end{table}

\begin{table*}[t]
    \centering
    \caption{Comparison between synthetic tasks and natural language tasks.}
    \label{tab:comparison-syn-lan}
    \resizebox{1.6\columnwidth}{!}{
    \begin{tabular}{lcc}
    \toprule
         & Synthetic Tasks & Language Tasks \\
    \midrule
        \rowcolor{lightblue}Output Space  & $\mathbb{R}$ & $L_1$ unit ball in $\mathbb{R}^{|\mathcal{V}|}$\vspace{3pt}\\
        Specific Task & Length/Sum prediction & Next token prediction \vspace{3pt}\\
        \rowcolor{lightblue}Long-Short Misalignment & Exist & Exist \vspace{3pt}\\
        Priori on Output Distribution & Explicit and predifined & Implicit and task-dependent \vspace{3pt}\\
        \rowcolor{lightblue}Alignment Technique & Explicit reparameterization & Regularization term across lengths\vspace{3pt}\\
        \makecell[l]{Does the technique decrease\\ \quad long-short misalignment?} & Yes (explicitly) & Yes (implicitly through optimization)\vspace{3pt}\\
        \rowcolor{lightblue}\Gape[0pt][2pt]{\makecell[l]{Does the technique improve\\ \quad length generalization?}}& Yes & Yes\vspace{3pt}\\
    \bottomrule
    \end{tabular}
    }
\end{table*}
\subsection{Ablation Studies}
{
Since we incorporate several hyperparameters such as the regularization coefficient $\alpha$ and the sampling range $|l_1-l_2|$ in the misalignment metric, we conduct extensive experiments to explore how these hyperparameters affect the model performance.}

{
\textbf{Regularization coefficient $\alpha$.} In addition to the settings already provided in the previous experiments ($\alpha = 0$ as the baseline, $\alpha = 0.1$, and $\alpha = 0.5$), we evaluated $\alpha = 0.3$ and $\alpha = 1.0$ under the same experimental conditions as Table \ref{tab:length-generalization}, using CLEX as the model adjustment. We still adopt RedPajama-Book as the training dataset. The results are shown in Table \ref{tab:ablation-alpha}, which reveal the following trend: (1) Performance peaks for $\alpha$ values in the range $[0.1, 0.3]$ in both evaluation metrics. (2) Larger values of $\alpha$ (e.g., $\alpha = 0.5$ or $\alpha = 1.0$) lead to a significant decline in performance, confirming the risks of over-regularization. These findings highlight the importance of selecting a moderate value for $\alpha$. We suggest using a coefficient $\alpha$ between 0.1 and 0.3 as default to mitigate the risk of over-regularization.}

{\textbf{Sampling range.} In equation \ref{equation-align}, we sample $l_1$ and $l_2$ from $[l_{\rm{train}}/2,l_{\rm{train}}]$ by default to avoid input sequence with significantly different lengths. This is equivalent to sampling $l_{\rm{extra}}$ from $[1, l_{\rm{train}}/2]$. Here we conduct an ablation study examining how different sampling strategies of $l_{\rm{extra}}$ affect performance. We consider four sampling configurations: (1) The current strategy, sampling from $[1, l_{\rm{train}}/2]$; (2) Sampling from a narrower range $[1, l_{\rm{train}}/4]$; (3) Sampling from a narrower range $[l_{\rm{train}}/4, l_{\rm{train}}/2]$; (4) Sampling from a broader range $[1,l_{\rm{train}}]$ and remove the limit that $l_1$ and $l_2$ should be in $[l_{\rm{train}}/2,l_{\rm{train}}]$. We conduct experiments using the same setting as Table \ref{tab:long-context}, using LongQLora for model adjustments and a regularization coefficient of $0.1$. The results are shown in Table \ref{tab:ablation-range}: (1) Setting 2 achieved performance comparable to the current strategy, while Setting 3 showed slightly inferior performance compared to the current strategy. This suggests that \textbf{aligning outputs between sequences with moderate length discrepancies effectively supports long-context modeling.} (2) Setting 4 yielded significantly worse performance than the current strategy, indicating that \textbf{encouraging alignment between sequences with large length differences adversely affects the model's long-context capabilities.} These results underscore the importance of carefully balancing the sampling range in the proposed regularization.}

We also compare our methods with baselines under the same computational time in Appendix \ref{appendix-same-time}.
%and without using model adjustment methods in Appendix \ref{appendix-without}.
\section{Discussion}
{Since our work is initially motivated by phenomena observed in synthetic tasks, we provide additional clarification on the relationship between synthetic tasks and natural language tasks by summarizing their key differences and similarities in Table \ref{tab:comparison-syn-lan}. While these two types of tasks differ significantly in their specific forms and output space, they share a common challenge: \textbf{output distribution misalignment across different input lengths}. Our analysis highlights that employing an alignment technique--whether explicit reparameterization in synthetic tasks or regularization in natural language tasks--can effectively mitigate this misalignment. This mitigation directly enhances the model's length generalization ability, demonstrating the broader applicability of our approach.}

\section{Conclusion}

In this work, we investigated the challenges of long-context modeling in large language models and introduced a fresh perspective by examining the output behavior of models across varying input lengths. We identified that \textbf{misalignment in output distributions across sequences of different lengths}, which we term \emph{long-short misalignment}, plays a critical role in limiting length generalization---a classical and foundational subproblem in long-context modeling. Through both synthetic and natural language tasks, we demonstrated that long-short misalignment is strongly correlated with performance on long-context inputs. We further proposed a regularization term based on this metric to explicitly reduce output divergence during training. Extensive experiments confirm that this approach not only improves length generalization but also leads to more effective long-context modeling. Overall, our work highlights the importance of considering the output space when designing and analyzing models for long-context scenarios, offering a new dimension for future research in scalable and effective language modeling.

\section*{Acknowledgements}
Yisen Wang was supported by National Key R\&D Program of China (2022ZD0160300), National Natural Science Foundation of China (92370129, 62376010), and Beijing Nova Program (20230484344, 20240484642).

\section*{Impact Statement}
We believe there are no direct ethical problems in this work since it primarily focuses on the theoretical analysis and improvement of large language models (LLMs). However, LLMs can generate inaccurate or harmful content, and this research does not offer a direct solution to these issues. Users are encouraged to ensure that the LLMs obey the ethical standards when implementing the proposed methods. 
% We have provided sufficient materials to ensure reproducibility. The details of experiments including training settings, models and datasets are provided in Section \ref{sec-experiment}, Appendix \ref{appendix-model-detail} and Appendix \ref{appendix-pytorch}. The formal statement and complete proofs of theorems are provided in Appendix \ref{appendix-proof}.

\bibliography{reference}
\bibliographystyle{icml2025}

%%%%%%%%%%%%%%%%%%%%%%%%%%%%%%%%%%%%%%%%%%%%%%%%%%%%%%%%%%%%%%%%%%%%%%%%%%%%%%%
%%%%%%%%%%%%%%%%%%%%%%%%%%%%%%%%%%%%%%%%%%%%%%%%%%%%%%%%%%%%%%%%%%%%%%%%%%%%%%%
% APPENDIX
%%%%%%%%%%%%%%%%%%%%%%%%%%%%%%%%%%%%%%%%%%%%%%%%%%%%%%%%%%%%%%%%%%%%%%%%%%%%%%%
%%%%%%%%%%%%%%%%%%%%%%%%%%%%%%%%%%%%%%%%%%%%%%%%%%%%%%%%%%%%%%%%%%%%%%%%%%%%%%%
\newpage
\appendix
\onecolumn
\section{Full Related Work}
\label{appendix-full-related}
\textbf{Length Generalization on Synthetic Tasks.} Our paper is related to the line of work that seeks to understand the capabilities and limitations of Transformer models when it comes to algorithmic reasoning \citep{velivckovic2021neural}. Specifically, we focus on simple tasks and study length generalization on the standard Transformer architecture with causal structure. Related to this, \citet{lee2023teaching} study how well transformers trained from scratch can learn simple arithmetic tasks, and find that no length generalization is observed. \citet{nogueira2021investigating} find that partial length generalization on addition is observed only when models reach 3B parameters and when the addition questions are presented in reverse order. \citet{jelassi2023length} study models trained on addition and find strong generalization performance when using a few examples of longer sequences. \citet{zhou2023algorithms} proposes the RASP Generalization Conjecture that Transformers tend to learn a length-generalizing solution if there exists a short RASP-L program that works for all input lengths. {\citet{liutransformers} discovers that the Transformer will learn shortcuts through the study of various synthetic tasks.} Besides these explorations on specific tasks, some works study the impact of different positional encodings on the length generalization of math reasoning tasks. Alibi adds a linear bias on the attention score to achieve better length generalization performance \citep{press2021train}. \citet{ontanon2022making} studies different settings of positional encodings and identifies Transformer configurations that generalize compositionally significantly better in a diverse set of compositional tasks. \citet{kazemnejad2023impact} systematically studies the role of no positional encoding (NoPE) in Transformer with causal structure. \citet{ruoss2023randomized} proposes a randomized positional encoding scheme that simulates the positions of longer sequences and randomly selects an ordered subset to fit the sequence's length.

\textbf{Long-context Modeling on Natural Language Tasks.} A series of works \citep{sun-etal-2023-length,chi2022kerple,chi-etal-2023-dissecting,zhang2024extending,chenclex,pengyarn,yang2023longqlora,chen2023extending} aim to extend the context size of Transformer-based models during fine-tuning, primarily by modifying positional encodings. For example, \citet{zhang2024extending} introduces a novel extension to RoPE \citep{su2024roformer} which combines adjusting RoPE’s base frequency and scaling the attention logits to help LLMs efficiently adapt to a larger context window. \citet{chenclex} generalizes the positional encoding scaling approaches to model the continuous dynamics by ordinary differential equations over the length scaling factor. \citep{chen2023extending} proposes to extend the context length by slightly modifying RoPE via Position Interpolation (PI) and fine-tuning on a small amount of data. Our approach, in contrast, focuses on the model's output space, which identifies the crucial role of long-short alignment in length generalization. {\citet{wang-etal-2024-resonance} proposes a novel approach designed to narrow the generalization gap by refining the interpolation of RoPE features for OOD positions and provides length extrapolation analysis on the feature gap.}
\section{Model Details for Synthetic Tasks}
\label{appendix-model-detail}
We focus on the decoder-only Transformer \citep{vaswani2017attention}, a model widely used in both synthetic tasks \citep{zhou2023algorithms,jelassi2023length} and LLMs \citep{touvron2023llama,pengyarn} which utilizes a causal mask in the self-attention module to enable auto-regressive generation. We consider several positional encodings: learnable positional encoding \citep{radford2019language}, Alibi \citep{press2021train}, rotary positional encoding \citep{su2024roformer} and no positional encoding (NoPE). Since recent works found that by removing the positional encoding, Transformers can trained to be well generalized on length \citep{deletang2022neural, kazemnejad2023impact}, we adopt this setting (i.e. NoPE) by default. To provide a clear signal for the model to accomplish the tasks, we add both the begin-of-sentence (BOS) token and the end-of-sentence (EOS) token in the sequence.
% In all the tasks considered in our experiments, we use a tokenizer with a vocabulary of only 5 tokens: \{0, 1, +, BOS, EOS\}.
We apply a feed-forward neural network on the hidden state of the last token to generate an output of a real number. We train all of our models on the train distribution from scratch to convergence if possible. For all tasks, the length of training examples is sampled uniformly from length 1 up to the max training length $l_{\mathrm{train}}$.
% The training loss is the mean square error (MSE) loss.
We select hyper-parameters such as the learning rate for each task based on what is required to fit the training set. At test time, the length of the examples will traverse from 1 to the max testing length $l_{\mathrm{test}}$.

\section{Theorems and Proofs}
\label{appendix-proof}
\subsection{Theoretical Analysis for Synthetic Tasks}
\label{appendix-theorem1}
{\citet{ahnlinear} suggests that linear Transformers serve as realistic abstractions for understanding Transformer optimization and generalization. Therefore, following \citep{ahnlinear,zhang2024look}, our analysis is based on the linear attention model. The general form of linear attention is given by:
\begin{equation}
\mathrm{Attn}(\mathbf{x})=QK^{\top }V=\mathbf{x}W^{Q}(\mathbf{x}W^{K})^{\top }\mathbf{x}W^{V},
\end{equation}
where $W^{Q},W^{K},W^{V}$ are projections, and $n$ is the length of input $\mathbf{x}$. In all tasks, we will use a linear attention model with a causal mask. Specifically, we normalize the output according to its position, which allows linear attention to perform similarly to dot-product attention. For example, the $k$-th output will be normalized as follows:
\begin{equation}
\frac{1}{k}Q_{k}K^{\top }V=\frac{1}{k}\sum_{i=1}^{k} Q_{k}K^{\top }_iV_{i}.
\end{equation}
For the synthetic tasks, we add bias terms to $Q,K,V$ to mitigate the impact of 0 in the input. The model based on linear attention is defined as follows:
\begin{equation}
g_{\theta}(\mathbf{x}_{[:k]})=\frac{1}{k}\sum_{i=1}^{k}Q_{k}K^{\top }_iV_{i},\label{equ:syn}
\end{equation}
and it is trained based on the following target function:
\begin{equation}
\mathcal{L}(g_{\theta};l_{\mathrm{train}})=\mathbb{E}_{\mathbf{x}\in \{0,1\}^{l_{\mathrm{train}}}} \left[\frac{1}{l_{\mathrm{train}}}\sum_{i=1}^{l_{\mathrm{train}}}\left\lVert g_{\theta }(\mathbf{x}_{[:i]}) -y(\mathbf{x}_{[:i]})\right\rVert _{2}^{2}\right],\label{equ:synloss}
\end{equation}
where $\mathbf{x}_{[:i]}$ indicates the first $i$-tokens of input $\mathbf{x}$. The optimal model trained on sequence with maximum length of $l_{\mathrm{train}}$ is denoted as $g_{\theta }^{l_{\mathrm{train}}}$. We have the following result:}
\begin{thm}
{In the length prediction task and the sum prediction, the length generalization loss has a quadratic relationship with the predicted length, i.e.,
\begin{equation}
\mathcal{E}_{\mathrm{length}}(g_{\theta }^{l_{\mathrm{train}}};l_{\mathrm{test}})=\mathbb{E}_{\mathbf{x}_{\mathrm{test}}\in\{0,1\}^{l_{\mathrm{test}}}} \left[\left\lVert g_{\theta }^{l_\mathrm{train}}(\mathbf{x}_\mathrm{test})-y(\mathbf{x}_\mathrm{test}) \right\rVert _{2}^2\right]=\mathcal{O}\left((l_{\mathrm{test}}-l_{\mathrm{train}})^{2}\right),
\end{equation}
\begin{equation}
\mathcal{E}_{\mathrm{sum}}(g_{\theta }^{l_{\mathrm{train}}};l_{\mathrm{test}})=\mathcal{O}\left((l_{\mathrm{test}}-l_{\mathrm{train}})^{2}\right).
\end{equation}
However, in the mean prediction task, the length generalization loss has a fixed upper bound:
\begin{equation}
\mathcal{E}_{\mathrm{mean}}(g_{\theta }^{l_{\mathrm{train}}};l_{\mathrm{test}})=\mathcal{O}(1).
\end{equation}}
\end{thm}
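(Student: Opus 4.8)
The plan is to show that the causal $1/k$ normalization forces the model's output to be a \emph{bounded} function of the input, \emph{uniformly in the prefix length} $k$, and then to read off all three statements from the behavior of the targets: bounded targets (mean) give $\mathcal{O}(1)$, while targets that grow with length (length, sum) force a quadratically growing gap.

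First I would pin down the functional form of $g_\theta$. Writing the affine maps with their bias terms as $Q_k = q_1 x_k + q_0$, $K_i = \kappa_1 x_i + \kappa_0$, $V_i = v_1 x_i + v_0$ and using $x_i \in \{0,1\}$ (so that $x_i^2 = x_i$), each summand $Q_k K_i^\top V_i$ expands to $c_0 + c_1 x_k + c_2 x_i + c_3 x_k x_i$ for constants $c_j = c_j(\theta)$. Substituting into \eqref{equ:syn} and averaging over $i$,
\[
g_\theta(\mathbf{x}_{[:k]}) = c_0 + c_1 x_k + (c_2 + c_3 x_k)\,\bar{x}_k,\qquad \bar{x}_k := \frac1k\sum_{i=1}^k x_i,
\]
so that, after absorbing the fixed BOS/EOS contributions into the $c_j$, the output is an affine function of $x_k \in \{0,1\}$ and $\bar{x}_k \in [0,1]$. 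The decisive structural consequence is that $|g_\theta(\mathbf{x}_{[:k]})| \le |c_0| + |c_1| + |c_2| + |c_3| =: B(\theta)$, a bound independent of $k$: the $1/k$ factor prevents any part of the output from scaling with length.

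For the mean task the target $y = \bar{x}_k$ also lies in $[0,1]$, so both prediction and target are bounded uniformly in $k$ and $\lVert g_\theta - y\rVert_2^2 \le (B+1)^2$; hence $\mathcal{E}_{\mathrm{mean}} = \mathcal{O}(1)$, with no need to compute the optimizer. For the length and sum tasks the trained model $g_\theta^{l_{\mathrm{train}}}$ still obeys $|g_\theta^{l_{\mathrm{train}}}| \le B(\theta^{l_{\mathrm{train}}})$, a constant depending on $l_{\mathrm{train}}$ but not on $l_{\mathrm{test}}$, whereas the targets are $y = l_{\mathrm{test}}$ and $y = \sum_{i \le l_{\mathrm{test}}} x_i \in [0, l_{\mathrm{test}}]$ respectively. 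Evaluating $\mathbb{E}_{\mathbf{x}_{\mathrm{test}}}[(g_\theta^{l_{\mathrm{train}}} - y)^2]$ and bounding $|g_\theta^{l_{\mathrm{train}}} - y| \le B + l_{\mathrm{test}}$ gives $\mathcal{E}_{\mathrm{length}}, \mathcal{E}_{\mathrm{sum}} = \mathcal{O}(l_{\mathrm{test}}^2) = \mathcal{O}((l_{\mathrm{test}} - l_{\mathrm{train}})^2)$, the two orders agreeing since $l_{\mathrm{train}}$ is held fixed. To obtain the matching lower bound and the sharp saturation scale I would additionally minimize the training loss \eqref{equ:synloss} over $(c_0,\dots,c_3)$ using $\mathbb{E}[\bar{x}_i] = \tfrac12$ and $\mathbb{E}[\bar{x}_i^2] = \tfrac14 + \tfrac1{4i}$, showing the optimal coefficients, and hence the saturated prediction, are $\Theta(l_{\mathrm{train}})$ because $\bar{x}_k$ concentrates near $1/2$ and carries no length-scaling signal.

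The main obstacle is the derivation in the first step: rigorously reducing the causal linear-attention output to the affine-in-$(x_k,\bar{x}_k)$ form while correctly tracking the bias terms and the BOS/EOS tokens, and---most importantly---confirming that the $1/k$ normalization genuinely caps the output so that no component can grow with $k$. Once that uniform bound is established the dichotomy is essentially forced: the support of the output distribution is trapped in a fixed interval, so it can match a target whose support is fixed (mean) but must drift away from one whose support moves to $\{l_{\mathrm{test}}\}$ (length) or scales with $l_{\mathrm{test}}$ (sum), which is precisely the long-short (mis)alignment picture.
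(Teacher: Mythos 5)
Your proposal is essentially correct, but it reaches the stated bounds by a genuinely different route than the paper. The paper's proof works entirely through the trained optimum: it observes that on binary inputs the model collapses to four scalars $q',q'',\kappa',\kappa''$, derives per-length optimality conditions for each task, reduces the training loss to a single scalar $\Gamma=q\kappa$, and solves $\partial\mathcal{L}/\partial\Gamma=0$ exactly, obtaining $\Gamma=(l_{\mathrm{train}}+1)/2$ for length prediction (hence the exact error $\left(l_{\mathrm{test}}-\tfrac{l_{\mathrm{train}}+1}{2}\right)^{2}$), an analogous $\Gamma=\Theta(l_{\mathrm{train}})$ for sum, and—crucially—for the mean task the fact that the per-length optimality conditions are \emph{independent of $l$}, so the trained optimum has exactly zero test error at every length (and error $\leq\varepsilon^{2}$ under an $\varepsilon$-perturbation). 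Your route instead isolates a uniform boundedness lemma: the $1/k$ causal normalization traps $g_\theta(\mathbf{x}_{[:k]})=c_0+c_1x_k+(c_2+c_3x_k)\bar{x}_k$ in a fixed interval $[-B(\theta),B(\theta)]$ independent of $k$, from which all three $\mathcal{O}(\cdot)$ statements as literally written follow almost immediately, and the lower bound $\mathcal{E}_{\mathrm{length}}\geq(l_{\mathrm{test}}-B)^{2}$ even gives $\Theta((l_{\mathrm{test}}-l_{\mathrm{train}})^{2})$ growth without computing the optimizer at all. What your approach buys is generality and economy: it applies to any parameters (trained or not) and makes transparent \emph{why} the dichotomy is forced—bounded output support versus drifting target support—which is exactly the long-short alignment thesis. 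What it loses is the substantive content of the mean-task claim: your $(B+1)^{2}$ bound is satisfied by a random or adversarial bounded model and does not show the mean task is actually \emph{learned}; the paper's zero-error computation at the optimum is what makes that case meaningful, and the theorem is really about the trained model $g_\theta^{l_{\mathrm{train}}}$. Your deferred sharpness step (minimizing over $(c_0,\dots,c_3)$ with $\mathbb{E}[\bar{x}_i]=\tfrac12$, $\mathbb{E}[\bar{x}_i^{2}]=\tfrac14+\tfrac{1}{4i}$) would essentially reproduce the paper's computation, with one caveat you should track: the coefficients are not free but satisfy the rank-one constraint $c_0c_3=c_1c_2$ inherited from the product structure $Q_kK_i^\top V_i$, so you must verify the unconstrained minimizer is attainable within that variety (the paper sidesteps this by working directly in the $(q,\kappa)$ variables and the reduction to $\Gamma$).
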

\begin{proof}
Since the input consists of $0$ and $1$, the $Q_{k},K_{k},V_{k}$ will each have only two possible values. Thus, we may assume that when the input token is 1, the $Q_{k},K_{k},V_{k}$ are $q',k',v'$ respectively, and when the input token is 0, the $Q_{k},K_{k},V_{k}$ are $q'',k'',v''$ respectively. Furthermore, we note that in \eqref{equ:syn}, $K_{i}$ and $V_{i}$ always share the same subscript. Therefore, we can treat them as a single token and replace them with $\kappa $, where we define $\kappa '=k'v'$ and $\kappa ''=k''v''$. We can decompose \eqref{equ:synloss} into $l_{\mathrm{train}}$ separate part:
\begin{equation}
\mathcal{L}(g_{\theta };l_{\mathrm{train}})=\frac{1}{l_{\mathrm{train}}}\sum_{l=1}^{l_{\mathrm{train}}}\ell (g_{\theta};l),\quad \ell (g_{\theta};l)=\mathbb{E}_{\mathbf{x}}\left[\left\lVert g_{\theta }(\mathbf{x}_{[:l]}) -y(\mathbf{x}_{[:l]})\right\rVert _{2}^{2}\right].
\end{equation}
Next, we consider each task individually.
\begin{enumerate}[(a)]
\item First, we study the length prediction task. In this case, $y(\mathbf{x}_{[:l]})=l$. Expanding $\ell (g_{\theta};l)$ we have:
\begin{equation*}
\begin{split}
\ell (g_{\theta};l)&=\mathbb{E}_{\mathbf{x}}\left[\left\lVert g_{\theta }(\mathbf{x}_{[:l]}) -y(\mathbf{x}_{[:l]})\right\rVert _{2}^{2}\right]\\
&=\frac{1}{2^{l}}\sum_{i=0}^{l}C_{l-1}^{i-1}\left(\frac{1}{l}(iq'\kappa '+(l-i)q'\kappa '')-l\right)^{2}+C_{l-1}^{i}\left(\frac{1}{l}(iq''\kappa '+(l-i)q''\kappa '')-l\right)^{2}.
\end{split}
\end{equation*}
It's easy to find that $\ell (g_{\theta },l)$ achieves its minimum $0$ if and only if $q',q'',\kappa',\kappa ''$ satisfy the following conditions:
\begin{equation}
\begin{cases}
q'=q''\neq 0,\kappa '=\kappa ''\neq 0,\\
q'\kappa '=q''\kappa '=q'\kappa ''=q''\kappa ''=l.
\end{cases}\label{equ:cond1}
\end{equation}
These are also properties that the solution of $\partial \ell (g_{\theta};l)=0$ holds. Note that the first property holds for any $\ell (g_{\theta};l)$ and is independent of $l$. Since $\mathcal{L}(g_{\theta };l_{\mathrm{train}})$ is composed of $\ell (g_{\theta};l)$, and each optimal solution of $\ell (g_{\theta};l)$ satisfying $q'=q''\neq 0,\kappa '=\kappa ''\neq 0$, then the global solution for $\partial \mathcal{L}(g_{\theta };l_{\mathrm{train}})=0$ should also have this property. Therefore, we may assume that $q'\kappa '=q''\kappa '=q'\kappa ''=q''\kappa ''=\Gamma $, and our goal is to find the $\Gamma $ that satisfying $\partial \mathcal{L}(g_{\theta };l_{\mathrm{train}})/\partial \Gamma =0$, which means that this $\Gamma $ minimizes $\mathcal{L}(g_{\theta };l_{\mathrm{train}})$. In this case, it is easy to find that $\ell (g_{\theta};l)=(\Gamma -l)^{2}$, which means that $\partial \ell (g_{\theta};l) / \partial \Gamma = 2(\Gamma -l)$. Therefore, we have:
\begin{equation}
\frac{\partial \mathcal{L}(g_{\theta };l_{\mathrm{train}})}{\partial \Gamma}=\frac{2}{l_{\mathrm{train}}}\sum_{l=1}^{l_{\mathrm{train}}}(\Gamma -l),
\end{equation}
solving $\partial \mathcal{L}(g_{\theta };l_{\mathrm{train}})/\partial \Gamma =0$ and we have $\Gamma =(l_{\mathrm{train}}+1)/2$, so the $g_{\theta }^{l_{\mathrm{train}}}$ satisfying $q'\kappa '=q''\kappa '=q'\kappa ''=q''\kappa ''=(l_{\mathrm{train}}+1)/2$.
Therefore, we have:
\begin{equation}
\mathcal{E}_{\mathrm{length}}(g_{\theta }^{l_{\mathrm{train}}};l_{\mathrm{test}})=\ell (g_{\theta }^{l_{\mathrm{train}}};l_{\mathrm{test}})=\left(l_{\mathrm{test}} - \frac{l_{\mathrm{train}}+1}{2}\right)^{2}=\mathcal{O}\left((l_{\mathrm{test}}-l_{\mathrm{train}})^{2}\right).
\end{equation}
\item We now study the sum prediction task. In this case, $y(\mathbf{x}_{[:l]})=\sum_{i=1}^{l}\mathbf{x}_{i}$. Expanding $\ell (g_{\theta};l)$ we have:
\begin{equation*}
\begin{split}
\ell (g_{\theta};l)&=\mathbb{E}_{\mathbf{x}}\left[\left\lVert g_{\theta }(\mathbf{x}_{[:l]}) -y(\mathbf{x}_{[:l]})\right\rVert _{2}^{2}\right]\\
&=\frac{1}{2^{l}}\sum_{i=0}^{l}C_{l-1}^{i-1}\left(\frac{1}{l}(iq'\kappa '+(l-i)q'\kappa '')-\textcolor{red}{i}\right)^{2}+C_{l-1}^{i}\left(\frac{1}{l}(iq''\kappa '+(l-i)q''\kappa '')-\textcolor{red}{i}\right)^{2}.
\end{split}
\end{equation*}
It's easy to find that $\ell (g_{\theta },l)$ achieves its minimum $0$ if and only if $q',q'',\kappa',\kappa ''$ satisfy the following conditions:
\begin{equation}
\begin{cases}
q'=q''\neq 0,\kappa '\neq 0,\textcolor{red}{\kappa ''=0},\\
q'\kappa '=q''\kappa '=l.
\end{cases}\label{equ:cond2}
\end{equation}
This is similar to the previous conditions \eqref{equ:cond1}. Thus, we can perform a similar analysis as above, and we similarly assume that $q'\kappa '=q''\kappa '=\Gamma $ with $\kappa ''=0$. In this case, we have:
\begin{equation}
\ell (g_{\theta };l)=\frac{1}{2^{l}l^{2}}\sum_{i=0}^{l}i^{2}C_{l}^{i}\left(\Gamma -l\right)^{2}=\frac{l+1}{2l}(\Gamma -l)^{2}.
\end{equation}
Thus we have:
\begin{equation}
\frac{\partial \ell (g_{\theta };l)}{\partial \Gamma }=\frac{l+1}{l}(\Gamma -l),
\end{equation}
which leads to:
\begin{equation}
\frac{\partial \mathcal{L}(g_{\theta };l_{\mathrm{train}})}{\partial \Gamma }=\frac{1}{l_{\mathrm{train}}}\sum_{l=1}^{l_{\mathrm{train}}}\frac{\partial \ell (g_{\theta };l)}{\partial \Gamma }=\frac{1}{l_{\mathrm{train}}}=(l_{\mathrm{train}}+H_{l_{\mathrm{train}}})\Gamma -\frac{l_{\mathrm{train}}(l_{\mathrm{train}}+3)}{2},
\end{equation}
where $H_{n}$ is the $n$-th harmonic number, i.e., $H_{n}=\sum_{i=1}^{n}1 / i$. Solving $\partial \mathcal{L}(g_{\theta };l_{\mathrm{train}})/\partial \Gamma =0$ and we have $\Gamma =l_{\mathrm{train}}(l_{\mathrm{train}}+3)/2(l_{\mathrm{train}}+H_{l_{\mathrm{train}}})$, so the $g_{\theta }^{l_{\mathrm{train}}}$ satisfying $q'\kappa '=q''\kappa '=l_{\mathrm{train}}(l_{\mathrm{train}}+3)/2(l_{\mathrm{train}}+H_{l_{\mathrm{train}}})$.
Therefore, we have:
\begin{equation}
\begin{split}
\mathcal{E}_{\mathrm{sum}}(g_{\theta }^{l_{\mathrm{train}}};l_{\mathrm{test}})&=\ell (g_{\theta }^{l_{\mathrm{train}}};l_{\mathrm{test}})=\frac{l_{\mathrm{test}}+1}{2l_{\mathrm{test}}}\left(\frac{l_{\mathrm{train}}(l_{\mathrm{train}}+3)}{2(l_{\mathrm{train}}+H_{l_{\mathrm{train}}})} -l_{\mathrm{test}}\right)^{2}\\
&\approx \frac{1}{2}\left(\frac{l_{\mathrm{train}}+3}{2+\mathrm{const}} -l_{\mathrm{test}}\right)^{2} \\
&=\mathcal{O}\left((l_{\mathrm{test}}-l_{\mathrm{train}})^{2}\right).
\end{split}
\end{equation}
\item Finally we study the mean prediction task. In this case, $y(\mathbf{x}_{[:l]})=\sum_{i=1}^{l}\mathbf{x}_{i}/l$. Expanding $\ell (g_{\theta};l)$ we have:
\begin{equation*}
\begin{split}
\ell (g_{\theta};l)&=\mathbb{E}_{\mathbf{x}}\left[\left\lVert g_{\theta }(\mathbf{x}_{[:l]}) -y(\mathbf{x}_{[:l]})\right\rVert _{2}^{2}\right]\\
&=\frac{1}{2^{l}}\sum_{i=0}^{l}C_{l-1}^{i-1}\left(\frac{1}{l}(iq'\kappa '+(l-i)q'\kappa '')-\textcolor{red}{\frac{i}{l}}\right)^{2}+C_{l-1}^{i}\left(\frac{1}{l}(iq''\kappa '+(l-i)q''\kappa '')-\textcolor{red}{\frac{i}{l}}\right)^{2}.
\end{split}
\end{equation*}
It's easy to find that $\ell (g_{\theta },l)$ achieves its minimum $0$ if and only if $q',q'',\kappa',\kappa ''$ satisfy the following conditions:
\begin{equation}
\begin{cases}
q'=q''\neq 0,\kappa '\neq 0,\textcolor{red}{\kappa ''=0},\\
q'\kappa '=q''\kappa '=\textcolor{red}{1}.
\end{cases}\label{equ:cond3}
\end{equation}
This is similar to the previous conditions \eqref{equ:cond2}. In fact, the above properties are irrelevant to $l$, so for all $\ell (g_{\theta },l)$ these properties hold. Therefore, in this case, the optimal solution of $\mathcal{L}(g_{\theta };l_{\mathrm{train}})$ will satisfy \eqref{equ:cond3}, which leads to $\mathcal{L}(g_{\theta };l_{\mathrm{train}})=0$. Under this situation, the length generalization error is:
\begin{equation}
\mathcal{E}_{\mathrm{sum}}(g_{\theta }^{l_{\mathrm{train}}};l_{\mathrm{test}})=\ell (g_{\theta }^{l_{\mathrm{train}}};l_{\mathrm{test}})=0.
\end{equation}
Consider the case where the solution is not optimal, i.e., when $q'\kappa '=q''\kappa ''=1+\varepsilon $, where $\varepsilon\neq 0 $ is small, we can similarly obtain:
\begin{equation}
\mathcal{E}_{\mathrm{sum}}(g_{\theta }^{l_{\mathrm{train}}};l_{\mathrm{test}})=\ell (g_{\theta }^{l_{\mathrm{train}}};l_{\mathrm{test}})=\frac{(l_{\mathrm{test}}+1)\varepsilon ^{2}}{2l_{\mathrm{test}}}\leq \varepsilon ^{2}.
\end{equation}
In conclusion, we have:
\begin{equation}
\mathcal{E}_{\mathrm{sum}}(g_{\theta }^{l_{\mathrm{train}}};l_{\mathrm{test}})=\mathcal{O}(1),
\end{equation}
which completes the proof.
\end{enumerate}
\end{proof}

% \subsection{Formal Statement and Proof of Theorem \ref{thm:pullback}}
% \label{appendix-theorem2}

\subsection{Theoretical Analysis for Natural Language Tasks}
\label{appendix-theorem2}
{We now focus on natural language tasks. We make some changes to the model following \citep{zhang2024look}. In natural language tasks, the model requires an additional projection $W$ to make the output a probability distribution. That is, the model is modified as follows:
\begin{equation}
g_{\theta}(\mathbf{x}_{[:k]})=\frac{1}{k}\sum_{i=1}^{k}Q_{k}K^{\top }_{i}V_{i}W,\label{equ:nlp}
\end{equation}
In this case, each token $\mathbf{x}_{i}$, output $g_\theta(\mathbf{x})$ and the objective function $y(\mathbf{x})$ are normalized. Without loss of generality, we may assume that the changes in the objective function after truncating the inputs are negligible, i.e.
\begin{equation}
\mathbf{Pr}(y(\mathbf{x}_{[:l_{1}]})\neq y(\mathbf{x}_{[:l_{2}]})) = 0 \quad (\forall l_{1},l_{2}).
\end{equation}
For simplicity, we use the $L_2$-norm instead of SCE to measure misalignment, i.e.:
\begin{equation}
\mathcal{L}_{\mathrm{misalign}}(g_\theta) = \mathbb{E}_{\mathbf{x},l_1, l_2} \left[ \Vert g_{\theta}(\mathbf{x}_{[-l_1:]})- g_{\theta}(\mathbf{x}_{[-l_2:]})\Vert_2^2\right],
\end{equation}
and we use the $L_2$-norm instead of the CE loss as the training loss function since these two functions differ only by a constant when the output and target are regularized.
Under these conditions, we have the following result:}
\begin{thm}[\bfseries \itshape Generalization guarantees for the natural language task]
Suppose that $g_{\theta}^{l_\mathrm{train}}$ is the model trained on sequences with maximum training length $l_\mathrm{train}$. When the testing length $l_\mathrm{test}$ satisfying  $l_{\mathrm{test}}>l_{\mathrm{train}}$, the generalization loss $\mathcal{E}_\mathrm{gen}(g_{\theta }^{l_{\mathrm{train}}};l_{\mathrm{test}})$ has the following upper bound:
\begin{equation}
\mathcal{E}_{\mathrm{gen}}(g_{\theta }^{l_\mathrm{train}};l_{\mathrm{test}})\leq C_{1}^{(l_{\mathrm{test}})}\cdot  \mathcal{L}_{\mathrm{misalign}}(g_{\theta }^{l_{\mathrm{train}}}) + C_{2}^{(l_{\mathrm{test}})}\cdot \mathcal{L}_{\mathrm{train}}(g_{\theta }^{l_{\mathrm{train}}}) + C_{0}^{(l_{\mathrm{test}})},
\end{equation}
where $C_{i}^{(l_{\mathrm{test}})}(i=0,1,2)$ are constants related to $l_{\mathrm{test}}$. Specifically, the $C_{i}^{(l_{\mathrm{test}})}$ increase as the $l_{\mathrm{test}}$ increases and the ratio $C_{1}^{(l_{\mathrm{test}})} / C_{2}^{(l_{\mathrm{test}})}$ increases as \(l_{\mathrm{test}}\) increases. This indicates that as the testing length increases, the alignment loss becomes increasingly significant.
\end{thm}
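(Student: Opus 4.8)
The plan is to bound the test error on a length-$l_{\mathrm{test}}$ sequence by chaining it, through a ladder of intermediate suffix lengths, down to a suffix whose length lies in the training range, so that the terminal ``short'' term is controlled by $\mathcal{L}_{\mathrm{train}}$ while each rung of the ladder is controlled by $\mathcal{L}_{\mathrm{misalign}}$. First I would fix a test input $\mathbf{x}$ and a ladder of lengths $l_{\mathrm{train}} = m_0 < m_1 < \cdots < m_N = l_{\mathrm{test}}$ with consecutive gaps bounded by the sampling window $\Delta = l_{\mathrm{train}}/2$ used in \eqref{equation-align}, so that $N = \Theta\big((l_{\mathrm{test}}-l_{\mathrm{train}})/\Delta\big)$. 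Using the assumption that truncation leaves the target unchanged, I would write the telescoping identity
\[
g_\theta(\mathbf{x}_{[-l_{\mathrm{test}}:]}) - y = \big(g_\theta(\mathbf{x}_{[-m_0:]}) - y\big) + \sum_{j=0}^{N-1}\big(g_\theta(\mathbf{x}_{[-m_{j+1}:]}) - g_\theta(\mathbf{x}_{[-m_j:]})\big).
\]
Squaring and applying Cauchy--Schwarz to the $(N+1)$-term sum yields
\[
\big\|g_\theta(\mathbf{x}_{[-l_{\mathrm{test}}:]}) - y\big\|_2^2 \le (N+1)\Big(\|g_\theta(\mathbf{x}_{[-m_0:]}) - y\|_2^2 + \sum_{j=0}^{N-1}\|g_\theta(\mathbf{x}_{[-m_{j+1}:]}) - g_\theta(\mathbf{x}_{[-m_j:]})\|_2^2\Big).
\]
Taking the expectation over $\mathbf{x}$, the first bracketed term becomes $\mathcal{L}_{\mathrm{train}}$ evaluated at the base length $m_0$, whereas each summand is an adjacent-length misalignment whose expectation I aim to control by $\mathcal{L}_{\mathrm{misalign}}$.

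The key step is then relating each rung $\|g_\theta(\mathbf{x}_{[-m_{j+1}:]}) - g_\theta(\mathbf{x}_{[-m_j:]})\|_2^2$ to the measured misalignment metric, even though the ladder visits lengths beyond the sampling window. Here I would invoke the explicit linear-attention form \eqref{equ:nlp}: since the two suffixes share their final token, the query $Q_{\mathrm{last}}$ is identical, and the difference of normalized sums $\tfrac{1}{m_{j+1}}\sum - \tfrac{1}{m_j}\sum$ can be written in closed form, exposing how the per-rung difference scales with the length increment and with the normalization factor $1/m_j$. This computation lets me bound every rung by a common constant multiple of $\mathcal{L}_{\mathrm{misalign}}$, folding the residual length-dependent normalization into the offset term.

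Finally I would convert the rung count into the stated coefficients: the Cauchy--Schwarz prefactor $(N+1)$ multiplying the $N$ misalignment summands gives $C_1^{(l_{\mathrm{test}})} = \Theta(N^2) = \Theta\big((l_{\mathrm{test}}-l_{\mathrm{train}})^2\big)$, while the single base-prediction term contributes $C_2^{(l_{\mathrm{test}})} = \Theta(N) = \Theta(l_{\mathrm{test}}-l_{\mathrm{train}})$, and the accumulated normalization residuals form $C_0^{(l_{\mathrm{test}})}$, which also grows with the number of rungs. Thus all three constants increase with $l_{\mathrm{test}}$, and their ratio $C_1^{(l_{\mathrm{test}})}/C_2^{(l_{\mathrm{test}})} = \Theta(l_{\mathrm{test}}-l_{\mathrm{train}})$ is increasing, exactly as claimed.

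I expect the main obstacle to be the middle step: justifying that adjacent-length output differences at lengths \emph{outside} the measured interval $[l_{\mathrm{train}}/2, l_{\mathrm{train}}]$ are still governed by $\mathcal{L}_{\mathrm{misalign}}$. This is precisely where the restriction to linear attention and the ``limited capacity to align sequences of different lengths'' from the proof sketch must be made quantitative. The cleanest route I foresee is to derive, directly from \eqref{equ:nlp}, a uniform (or monotone) bound on the per-length increment of the output, showing that the increment over the extrapolated ladder is controlled by its behavior inside the sampling window; establishing this uniform control, and tracking its constants carefully enough to certify that $C_1^{(l_{\mathrm{test}})}/C_2^{(l_{\mathrm{test}})}$ is genuinely increasing rather than merely bounded, is the delicate part of the argument.
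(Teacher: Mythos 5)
Your plan is essentially the paper's own proof: the same telescoping ladder from $l_{\mathrm{test}}$ down to the training length with a Cauchy--Schwarz prefactor of order $N$, the same use of the linear-attention closed form to rewrite each rung difference as an in-window misalignment between truncated sequences (of lengths $l$ and $l+l_{\mathrm{extra}}$ inside the sampled range) plus a normalization residual absorbed into $C_0^{(l_{\mathrm{test}})}$, and the same coefficient asymptotics $C_1^{(l_{\mathrm{test}})} = \Theta(N^2)$, $C_2^{(l_{\mathrm{test}})} = \Theta(N)$ yielding the increasing ratio. The ``delicate part'' you flag is resolved in the paper exactly as you anticipate---by expanding the rung difference via \eqref{equ:nlp}, splitting off the shared-prefix term scaled by $1/l^{(k+1)} - 1/l^{(k)}$, and averaging over the random base length $l$ so that the surviving terms match the sampling distribution in $\mathcal{L}_{\mathrm{misalign}}$---so your proposal is correct and follows the paper's route.
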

\begin{proof}
Suppose that
\begin{equation}
l_{\mathrm{extra}} = \begin{cases}
\displaystyle \frac{l_{\mathrm{train}}}{2} - l,&\displaystyle  l < \frac{l_{\mathrm{train}}}{2},\vspace{1ex}\\
\displaystyle l_{\mathrm{train}} - l,&\displaystyle  l \geq  \frac{l_{\mathrm{train}}}{2}.\\
\end{cases}
\end{equation}
where $l\in [1,l_{\mathrm{train}}]$ is an arbitrary integer. Let $l^{(k)} = l_{\mathrm{train}} + k \cdot l_{\mathrm{extra}},(k=0,1,2,\cdots ,N_{l})$, where $N_{l} = \left\lfloor (l_{\mathrm{test}}-l_{\mathrm{train}}) / l_{\mathrm{extra}} \right\rfloor$, we have:
\begin{equation}
\begin{split}
\left\lVert g_{\theta}^{l_{\mathrm{train}}}(\mathbf{x}_{[-l_{\mathrm{test}}:]})-y(\mathbf{x}_{[-l_{\mathrm{test}}:]}) \right\rVert _{2}^{2} &  \leq (N_l + 2) \left( \left\lVert g_{\theta}^{l_{\mathrm{train}}}(\mathbf{x}_{[-l_{\mathrm{test}}:]})-g_{\theta}^{l_{\mathrm{train}}}(\mathbf{x}_{[-l^{(N_{l})}:]}) \right\rVert _{2}^{2}\right.\\
& \displaystyle + \sum_{k=0}^{N_{l}-1}  \left\lVert g_{\theta}^{l_{\mathrm{train}}}(\mathbf{x}_{[-l^{(k+1)}:]})-g_{\theta}^{l_{\mathrm{train}}}(\mathbf{x}_{[-l^{(k)}:]}) \right\rVert _{2}^{2}\\
& \left.+ \left\lVert g_{\theta}^{l_{\mathrm{train}}}(\mathbf{x}_{[-l_{\mathrm{train}}:]})-y(\mathbf{x}_{[-l_{\mathrm{test}}:]}) \right\rVert _{2}^{2}\right).\label{equ:expand}
\end{split}
\end{equation}
For each $k\in [0,N_{l}-1]$, we have:
\begin{equation}
\begin{split}
& \left\lVert g_{\theta}^{l_{\mathrm{train}}}(\mathbf{x}_{[-l^{(k+1)}:]})-g_{\theta}^{l_{\mathrm{train}}}(\mathbf{x}_{[-l^{(k)}:]}) \right\rVert _{2}^{2} \\
& = \left\lVert \frac{1}{l^{(k+1)}}\mathbf{x}_0W^{Q}(W^{K})^{\top }\mathbf{x}_{[-l^{(k+1)}:]}^\top\mathbf{x}_{[-l^{(k+1)}:]}W^{V} W - \frac{1}{l^{(k)}}\mathbf{x}_0W^{Q}(W^{K})^{\top }\mathbf{x}_{[-l^{(k)}:]}^\top\mathbf{x}_{[-l^{(k)}:]}W^{V} W \right\rVert _{2}^2\\
&=\left\lVert \mathbf{x}_0W^{Q}(W^{K})^{\top } \left( \frac{ \mathbf{x}_{[-l^{(k+1)}:]}^\top\mathbf{x}_{[-l^{(k+1)}:]} }{l^{(k+1)}} - \frac{\mathbf{x}_{[-l^{(k)}:]}^\top\mathbf{x}_{[-l^{(k)}:]}}{l^{(k)}}\right) W^{V} W \right\rVert _{2}^2\\
& \leq \left\lVert \mathbf{x}_0W^{Q}(W^{K})^{\top } \mathbf{x}_{[-(l^{(k)}-l):]}^\top\mathbf{x}_{[-(l^{(k)}-l):]} \left( \frac{1}{l^{(k+1)}} - \frac{1}{l^{(k)}} \right) W^{V} W \right\rVert _{2}^2\\
& + \left\lVert \mathbf{x}_0W^{Q}(W^{K})^{\top } \left( \frac{ \mathbf{x}_{[-l^{(k+1)}:(l^{(k)}-l)]}^\top\mathbf{x}_{[-l^{(k+1)}:(l^{(k)}-l)]} }{l^{(k+1)} } - \frac{ \mathbf{x}_{[-l^{(k)}:(l^{(k)}-l)]}^\top\mathbf{x}_{[-l^{(k)}:(l^{(k)}-l)]} }{l^{(k)}}\right) W^{V} W \right\rVert _{2}^2 \\
& \leq \frac{l_{\mathrm{extra}}^{2} (l^{(k)}-l)^{2}C_{0}^2}{(l^{(k+1)}l^{(k)})^{2}}\\
& + \left\lVert \frac{l+l_{\mathrm{extra}}}{l^{(k+1)}}g_{\theta}^{l_{\mathrm{train}}}(\mathbf{x}_{[-l^{(k+1)}:(l^{(k)}-l)]}) - \frac{l}{l^{(k)}} g_{\theta}^{l_{\mathrm{train}}}(\mathbf{x}_{[-l^{(k)}:(l^{(k)}-l)]}) \right\rVert _{2}^2,
\end{split}\label{equ:termexpand}
\end{equation}
where $C_{0}=\left\lVert W^{Q}(W^{K})^{\top }W^{V}W \right\rVert _{2}^2 $ is a constant. The first term in \eqref{equ:termexpand} can be upper bounded by:
\begin{equation}
\frac{l_{\mathrm{extra}}^{2} (l^{(k)}-l)^{2}C_{0}^2}{(l^{(k+1)}l^{(k)})^{2}} \leq \frac{l_{\mathrm{test}}^{2} C_{0}^2 }{4 l_{\mathrm{train}} ^{ 2}},
\end{equation}
and for the second term in \eqref{equ:termexpand}, we have:
\begin{equation}
\begin{split}
& \left\lVert \frac{l+l_{\mathrm{extra}}}{l^{(k+1)}}g_{\theta}^{l_{\mathrm{train}}}(\mathbf{x}_{[-l^{(k+1)}:(l^{(k)}-l)]}) - \frac{l}{l^{(k)}} g_{\theta}^{l_{\mathrm{train}}}(\mathbf{x}_{[-l^{(k)}:(l^{(k)}-l)]}) \right\rVert _{2}^2\\
& \leq \left\lVert \frac{(l^{(k)}-l)l_{\mathrm{extra}}}{l^{(k+1)}l^{(k)}}g_{\theta}^{l_{\mathrm{train}}}(\mathbf{x}_{[-l^{(k+1)}:(l^{(k)}-l)]}) \right\rVert _{2}^2\\
& + \left(\frac{l}{l^{(k)}}\right)^{2}\left\lVert g_{\theta}^{l_{\mathrm{train}}}(\mathbf{x}_{[-l^{(k+1)}:(l^{(k)}-l)]}) - g_{\theta}^{l_{\mathrm{train}}}(\mathbf{x}_{[-l^{(k)}:(l^{(k)}-l)]}) \right\rVert _{2}^2\\
& \leq \frac{l_{\mathrm{test}}^{2} }{4 l_{\mathrm{train}} ^{ 2}} + \left\lVert g_{\theta}^{l_{\mathrm{train}}}(\mathbf{x}_{[-l^{(k+1)}:(l^{(k)}-l)]}) - g_{\theta}^{l_{\mathrm{train}}}(\mathbf{x}_{[-l^{(k)}:(l^{(k)}-l)]}) \right\rVert _{2}^2.
\end{split}
\end{equation}
In fact, the second term above represents the alignment error between $\mathbf{x}_{[-l^{(k+1)}:(l^{(k)}-l)]}$ and $\mathbf{x}_{[-l^{(k)}:(l^{(k)}-l)]}$, where these two sequences have a length difference of $l_{\mathrm{extra}}$. 

On the other hand, for the third term in \eqref{equ:expand}, if $l < l_{\mathrm{train}} / 2$, we have:
\begin{equation}
\begin{split}
\left\lVert g_{\theta}^{l_{\mathrm{train}}}(\mathbf{x}_{[-l_{\mathrm{train}}:]})-y(\mathbf{x}_{[-l_{\mathrm{test}}:]}) \right\rVert _{2}^{2}&\leq \left\lVert g_{\theta}^{l_{\mathrm{train}}}(\mathbf{x}_{[-l_{\mathrm{train}}:]})-g_{\theta}^{l_{\mathrm{train}}}(\mathbf{x}_{[-(l+l_{\mathrm{train}}/2):]})\right\rVert _{2}^{2}\\
& + \left\lVert g_{\theta}^{l_{\mathrm{train}}}(\mathbf{x}_{[-(l+l_{\mathrm{train}}/2):]}) - g_{\theta}^{l_{\mathrm{train}}}(\mathbf{x}_{[-l:]})\right\rVert _{2}^{2}\\
& +\left\lVert g_{\theta}^{l_{\mathrm{train}}}(\mathbf{x}_{[-l:]})-y(\mathbf{x}_{[-l_{\mathrm{test}}:]}) \right\rVert _{2}^{2}.
\end{split}\label{equ:trainexpand}
\end{equation}
The first term above similarly represents the alignment error between two sequences with a length difference of \(l_{\mathrm{extra}}\), while the last term corresponds to the training error. Additionally, it is easy to derive an upper bound for the second term: $\left\lVert g_{\theta}^{l_{\mathrm{train}}}(\mathbf{x}_{[-(l+l_{\mathrm{train}}/2):]}) - g_{\theta}^{l_{\mathrm{train}}}(\mathbf{x}_{[-l:]})\right\rVert _{2}^{2}\leq 2$, and this upper bound also holds for the first term in \eqref{equ:expand}. Meanwhile, if $l \geq  l_{\mathrm{train}}$, we have:
\begin{equation}
\begin{split}
\left\lVert g_{\theta}^{l_{\mathrm{train}}}(\mathbf{x}_{[-l_{\mathrm{train}}:]})-y(\mathbf{x}_{[-l_{\mathrm{test}}:]}) \right\rVert _{2}^{2}&\leq \left\lVert g_{\theta}^{l_{\mathrm{train}}}(\mathbf{x}_{[-l_{\mathrm{train}}:]})-g_{\theta}^{l_{\mathrm{train}}}(\mathbf{x}_{[-l:]})\right\rVert _{2}^{2}\\
& +\left\lVert g_{\theta}^{l_{\mathrm{train}}}(\mathbf{x}_{[-l:]})-y(\mathbf{x}_{[-l_{\mathrm{test}}:]}) \right\rVert _{2}^{2}.
\end{split}
\end{equation}
This result is similar to \eqref{equ:trainexpand} except for lacking the second term in \eqref{equ:trainexpand}. Overall, we have:
\begin{equation}
\begin{split}
\left\lVert g_{\theta}^{l_{\mathrm{train}}}(\mathbf{x}_{[-l_{\mathrm{test}}:]})-y(\mathbf{x}_{[-l_{\mathrm{test}}:]}) \right\rVert _{2}^{2} & \leq (N_l + 2) \left(\sum_{k=0}^{N_{l}-1}  \left\lVert g_{\theta}^{l_{\mathrm{train}}}(\mathbf{x}_{[-l^{(k+1)}:(l^{(k)}-l)]}) - g_{\theta}^{l_{\mathrm{train}}}(\mathbf{x}_{[-l^{(k)}:(l^{(k)}-l)]}) \right\rVert _{2}^2 \right.\\
& + \left\lVert g_{\theta}^{l_{\mathrm{train}}}(\mathbf{x}_{[-l_{\mathrm{train}}:]})-g_{\theta}^{l_{\mathrm{train}}}(\mathbf{x}_{[-(l+l_{\mathrm{train}}/2):]})\right\rVert _{2}^{2} \\
& + \left\lVert g_{\theta}^{l_{\mathrm{train}}}(\mathbf{x}_{[-l:]})-y(\mathbf{x}_{[-l_{\mathrm{test}}:]}) \right\rVert _{2}^{2} \\
&\left. + N_{l}\frac{l_{\mathrm{test}}^{2} (C_{0}^2+1) }{4 l_{\mathrm{train}} ^{ 2}} + 4 \right)
\end{split}\quad \left(l < \frac{l_{\mathrm{train}}}{2}\right)
\end{equation}
or
\begin{equation}
\begin{split}
\left\lVert g_{\theta}^{l_{\mathrm{train}}}(\mathbf{x}_{[-l_{\mathrm{test}}:]})-y(\mathbf{x}_{[-l_{\mathrm{test}}:]}) \right\rVert _{2}^{2} & \leq (N_l + 2) \left(\sum_{k=0}^{N_{l}-1}  \left\lVert g_{\theta}^{l_{\mathrm{train}}}(\mathbf{x}_{[-l^{(k+1)}:(l^{(k)}-l)]}) - g_{\theta}^{l_{\mathrm{train}}}(\mathbf{x}_{[-l^{(k)}:(l^{(k)}-l)]}) \right\rVert _{2}^2\right.\\
& + \left\lVert g_{\theta}^{l_{\mathrm{train}}}(\mathbf{x}_{[-l:]})-y(\mathbf{x}_{[-l_{\mathrm{test}}:]}) \right\rVert _{2}^{2} \\
& \left.+ N_{l}\frac{l_{\mathrm{test}}^{2} (C_{0}^2+1) }{4 l_{\mathrm{train}} ^{ 2}} + 2\right)
\end{split}\quad \left(l \geq  \frac{l_{\mathrm{train}}}{2}\right).
\end{equation}
It's easy to know that the probability of $l < l_\mathrm{train} / 2$ and $l \geq l_\mathrm{train} / 2$ is $\lfloor (l_\mathrm{train} - 1) / 2\rfloor / l_\mathrm{train}$ and $\lfloor (l_\mathrm{train} + 2) / 2 \rfloor / l_\mathrm{train}$ respectively. Therefore, by taking the expectation over $l$ and $\mathbf{x}$ on both sides of the inequality, we have:
\begin{equation}
\begin{split}
\mathcal{E}_{\mathrm{gen}}(g_{\theta }^{l_\mathrm{train}};l_{\mathrm{test}}) & \leq \left(C_{l_{\mathrm{test}}}^{(2)}+\frac{1}{l_\mathrm{train}} \cdot \left\lfloor \frac{l_\mathrm{train} - 1}{2} \right\rfloor\right) \mathcal{L}_{\mathrm{misalign}}(g_{\theta }^{l_{\mathrm{train}}}) + C_{l_{\mathrm{test}}}^{(1)}\cdot \mathcal{L}_\mathrm{train}(g_{\theta }^{l_{\mathrm{train}}})
\\ & + C_{l_{\mathrm{test}}}^{(2)}\cdot \frac{l_{\mathrm{test}}^{2} (C_{0}^2+1) }{4 l_{\mathrm{train}} ^{ 2}} + C_{l_{\mathrm{test}}}^{(1)} \cdot \frac{6l_\mathrm{train} - 3 - (-1)^{l_\mathrm{train}}}{2l_\mathrm{train}},
\end{split}
\end{equation}
where
\begin{equation}
C_{l_{\mathrm{test}}}^{(1)} = \mathbb{E}_{l} [N_{l}+2], C_{l_{\mathrm{test}}}^{(2)} = \mathbb{E}_{l}[N_l (N_{l}+2)].
\end{equation}
It is easy to verify that $C_{l_{\mathrm{test}}}^{(i)}$ increases as $l_{\mathrm{test}}$ increases. Consequently, the constants $C_{1}^{(l_{\mathrm{test}})}=C_{l_{\mathrm{test}}}^{(2)}+\lfloor (l_\mathrm{train} - 1) / 2\rfloor / l_\mathrm{train}, C_{2}^{(l_{\mathrm{test}})} = C_{l_{\mathrm{test}}}^{(1)}$ and $C_{0}^{(l_{\mathrm{test}})} = C_{l_{\mathrm{test}}}^{(2)}l^{2}_{\mathrm{test}}(C_{0}^2+1)/(4l_{\mathrm{train}}^{2}) + C_{l_{\mathrm{test}}}^{(1)}(6l_\mathrm{train} - 3 - (-1)^{l_\mathrm{train}})/(2l_\mathrm{train})$ also increases as \(l_{\mathrm{test}}\) increases. Moreover, the ratio $C_{1}^{(l_{\mathrm{test}})} / C_{2}^{(l_{\mathrm{test}})}$ increases as \(l_{\mathrm{test}}\) increases since the ratio $C_{l_{\mathrm{test}}}^{(2)} / C_{l_{\mathrm{test}}}^{(1)}$ increases as \(l_{\mathrm{test}}\) increases, which is easy to verify.
\end{proof}
\section{Analysis on the Sum Prediction Task}
\label{appendix-sum-prediction}
We also examine the sum prediction task, where the label corresponds to the sum of the sequence. Similar to the length prediction task, the output space shifts as the sequence length increases. As a result, models struggle with length generalization in this task, as shown in Figure \ref{fig-sum-task}. However, by applying the reparameterization technique proposed in Section \ref{sec-case-study}, we observe a significant improvement in length generalization, as shown in Figure \ref{fig-sum-repara}. These results demonstrate the importance of long-short alignment in length generalization. 

\begin{figure}[t]
    \centering
    \begin{subfigure}{.48\textwidth}
    \centering   
    \includegraphics[width=\textwidth]{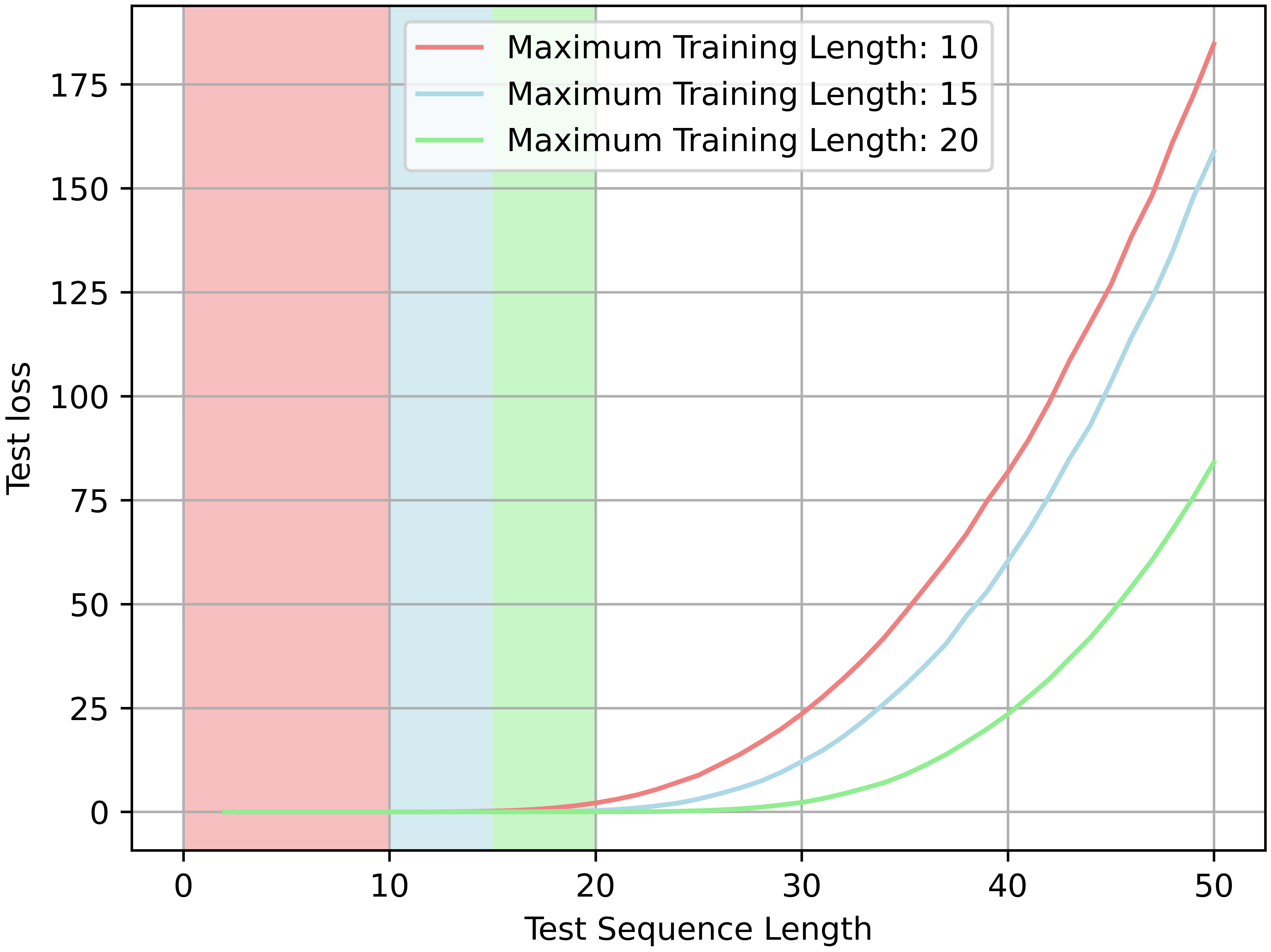}
        \caption{Length generalization in sum prediction task with different maximum training sequence lengths}
        \label{fig-different-sum}   
    \end{subfigure}
    \hfill
    \begin{subfigure}{.48\textwidth}
    \centering  
    \includegraphics[width=\textwidth]{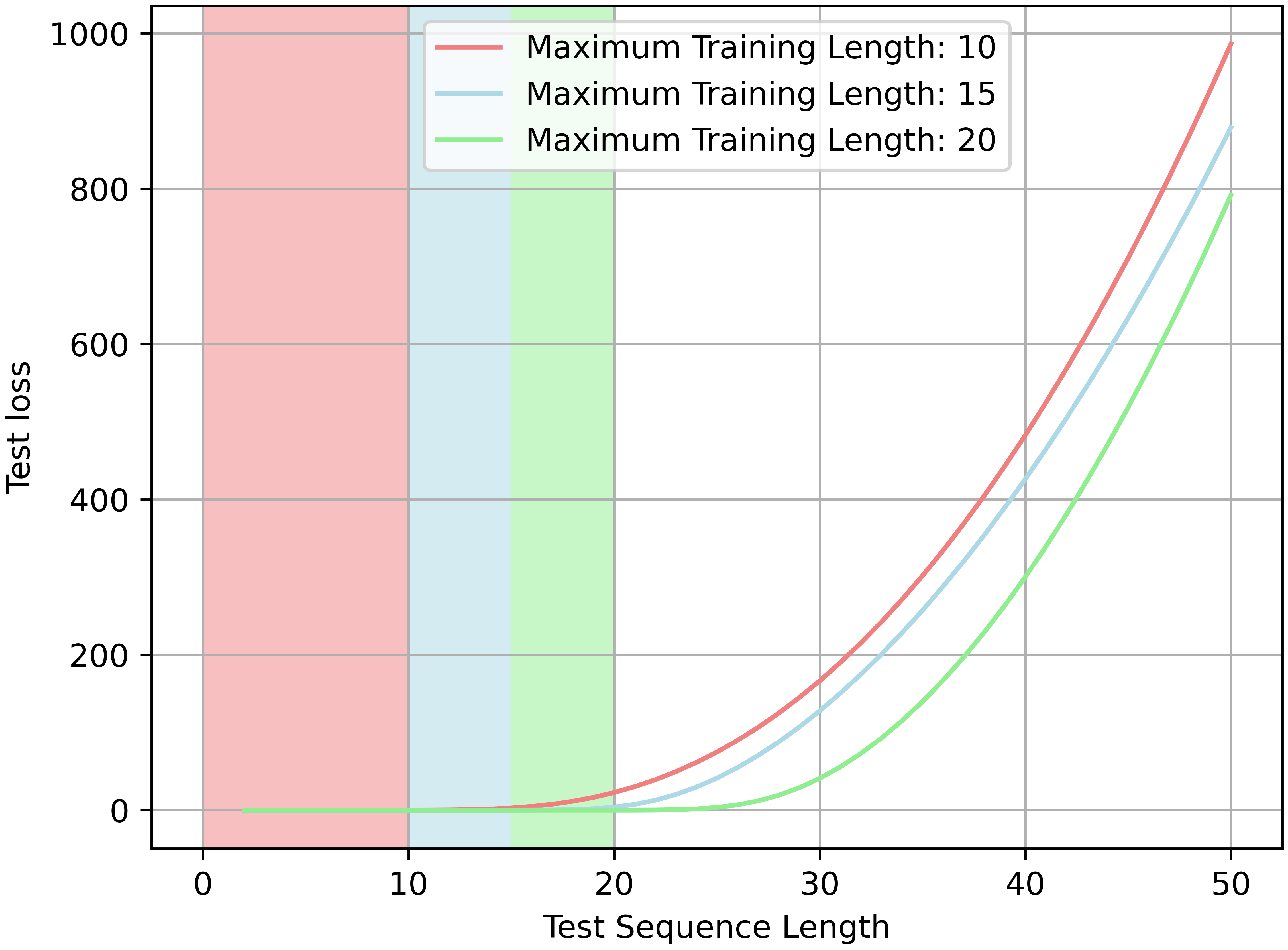}
        \caption{Length generalization in length prediction task with different maximum training sequence lengths}
        \label{fig-different-length}   
    \end{subfigure}
    \caption{Length generalization performance in the sum prediction and length prediction task with different maximum training sequence lengths. Although increasing the training length helps reduce the generalization error, the overall trend of increasing test loss remains unchanged.}
    \label{fig-different}
\end{figure}

\begin{figure}[t]
    \centering
    \begin{subfigure}{.48\textwidth}
    \centering   
    \includegraphics[width=\textwidth]{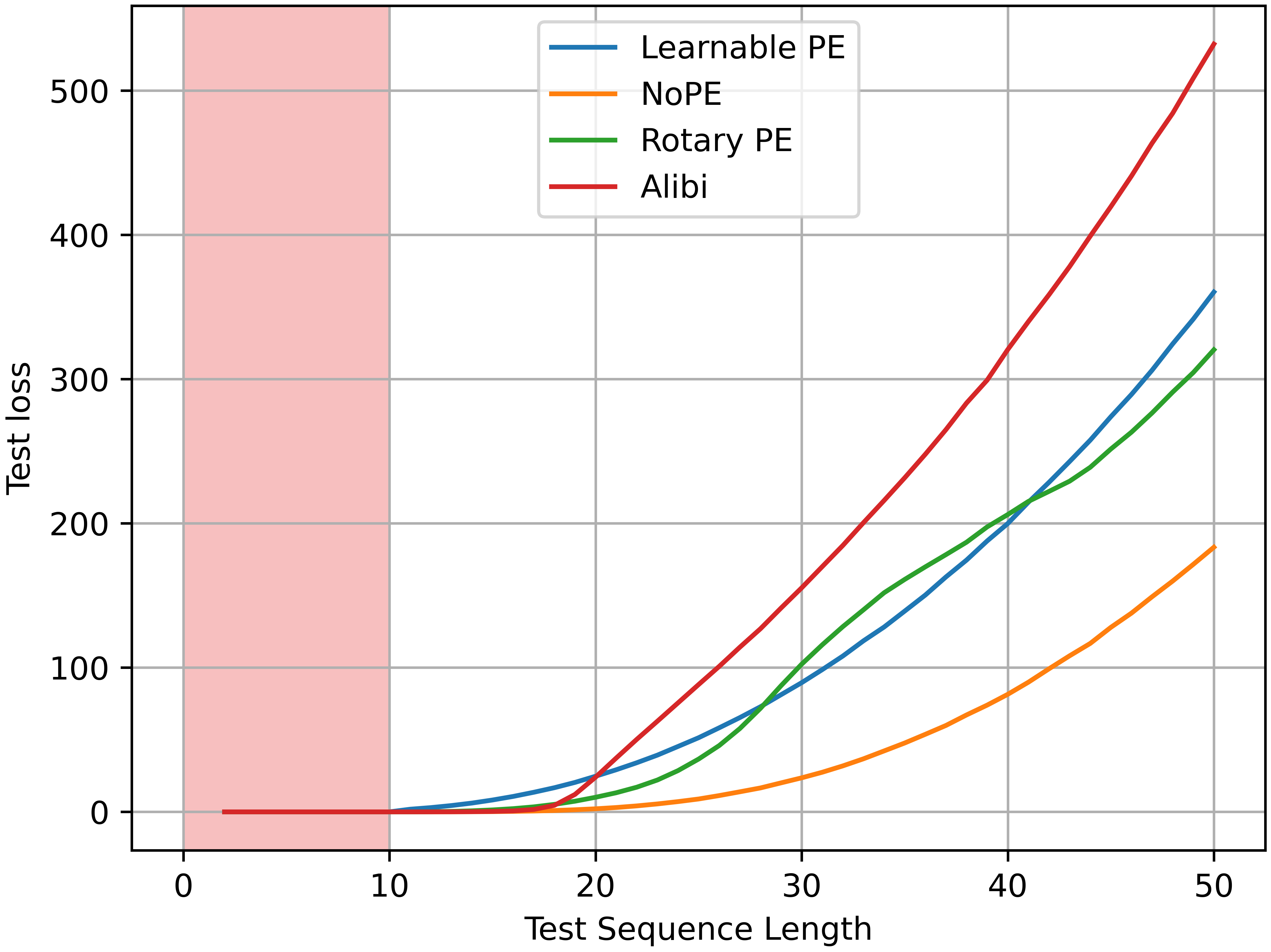}
        \caption{Length generalization in the sum prediction task}
        \label{fig-sum-task}   
    \end{subfigure}
    \hfill
    \begin{subfigure}{.48\textwidth}
    \centering  
    \includegraphics[width=\textwidth]{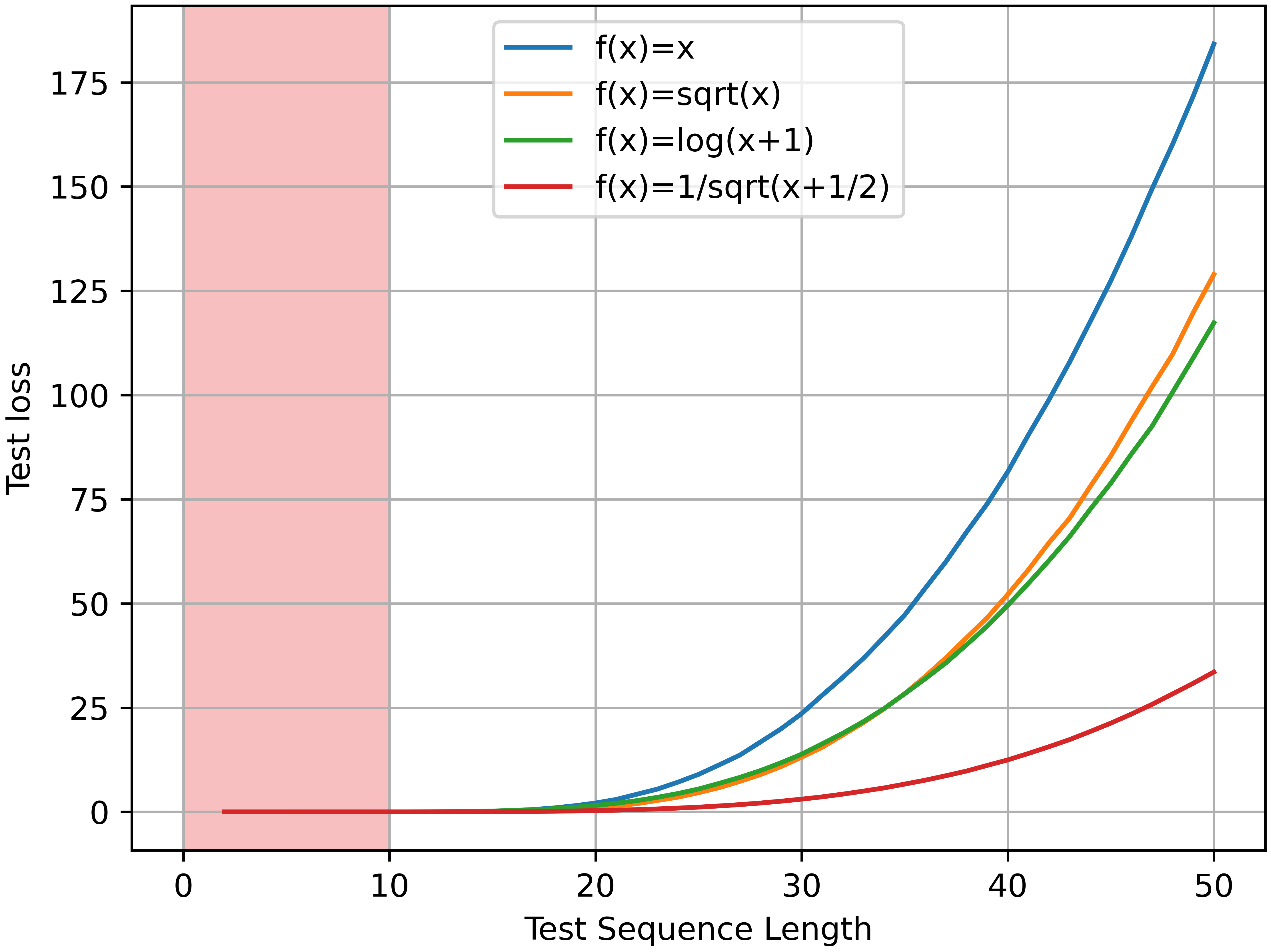}
        \caption{Reparameterization in the sum prediction task }
        \label{fig-sum-repara}   
    \end{subfigure}
    \caption{Length generalization in the sum prediction task. Explicit alignment of output space boosts length generalization performance.}
\end{figure}
\section{Pytorch-like Code for Implementation of $\mathcal{L}^*_{\mathrm{train}}$}
\label{appendix-pytorch}
\begin{lstlisting}
# An efficient implementation for the total training loss.
import torch
import random

def SCE(output1_prob, output2_prob):
    loss = torch.mean((torch.sum(- output1_prob * torch.exp(output2_prob) - output1_prob * torch.exp(output1_prob), -1)))

extra_len = random.randint(1, max_len//2)
data = get_data(seq_len=max_len+extra_len)

output1 = model(data[:, :max_len])
output2 = model(data[:, -max_len:])
prob1 = torch.nn.functional.log_softmax(output1.logits)
prob2 = torch.nn.functional.log_softmax(output2.logits)

# Select the overlapped part to calculate the misalign loss
prob1 = prob1[:, max_len//2+extra_len:]
prob2 = prob2[:, max_len//2:max_len-extra_len]

loss_ce = (output1.loss + output2.loss) / 2
loss_misalign = SCE(prob1, prob2)

loss_total = loss_ce + alpha * loss_misalign
\end{lstlisting}

\section{Comparison Under Same Computation Time}
\label{appendix-same-time}
{To account for the additional computation time required to calculate $\mathcal{L}_{\rm{misalign}}$ we will compare our methods with the baseline as in Table \ref{tab:length-generalization} and Table \ref{tab:long-context} based on total computation time. we compare our method with the baseline under equivalent total computation costs. Our method introduces an additional computational overhead of approximately $3\%$ to $5\%$ per step. To ensure fairness, we adjust the number of training steps proportionally. For example, when the baseline is trained for 50, 100, and 200 steps, our method is trained for 47, 95, and 190 steps, respectively, achieving comparable total computation times. We observe that the performance trends remain consistent: our method continues to outperform the baseline under equivalent computation time. This underscores the efficiency of our approach despite the minor additional cost.}

\begin{table}[t]
    \caption{Performance of the fine-tuned models using only cross-entropy loss (baseline) and an additional long-short misalignment loss on long-context modeling benchmark, LongBench-E score \citep{bai2023longbench} and perplexity on the 8k-length validation set. The fine-tuning sequence length is 4k, exactly the same as the training sequence length. We adopt two datasets: RedPajama-Book \citep{together2023redpajama} and PG19 \citep{rae2019compressive}. The models finetuned with our proposed loss outperform the baseline across different model adaption strategies. The comparison is based on fixed total computation time. $(a/b)$ in the training steps mean $a$ steps for the baseline and $b$ steps for our method.}
    \label{tab:length-generalization-time}
    \centering
    \begin{tabular}{lcccccc}
    \toprule
       Benchmark  & \multicolumn{3}{c}{LongBench-E ($\uparrow$)} & \multicolumn{3}{c}{Perplexity ($\downarrow$)}\\
       Training steps  &   50/47 & 100/95 &  200/190 & 50/47 & 100/95 & 200/190\\
    \midrule
    \multicolumn{3}{l}{\textit{RedPajama-Book}}\\
    $\mathcal{L}_{\mathrm{train}}$ (Baseline) & 22.7 & 23.8 & 24.7 & 7.21 & 6.56 & 6.12\\
    $\mathcal{L}_{\mathrm{train}}+0.1\mathcal{L}_{\mathrm{misalign}}$ (Ours) & \textbf{23.1} & \textbf{25.1} & \textbf{26.4}  & \textbf{6.92} & \textbf{6.27}& \textbf{5.91}\\
    $\mathcal{L}_{\mathrm{train}}+0.5\mathcal{L}_{\mathrm{misalign}}$ (Ours) & 21.7 & 23.4 & 24.5 & 7.62 & 7.16 & 6.61\\
    \midrule
    \multicolumn{3}{l}{\textit{PG19}}\\
    $\mathcal{L}_{\mathrm{train}}$ (Baseline) & 20.2 & 21.4 & 22.5 & \textbf{8.92} & \textbf{7.89} & 7.45\\
    $\mathcal{L}_{\mathrm{train}}+0.1\mathcal{L}_{\mathrm{misalign}}$ (Ours) & \textbf{20.7} & \textbf{22.0}  & \textbf{25.1} & 9.02 & 8.01 & \textbf{7.39}\\
    $\mathcal{L}_{\mathrm{train}}+0.5\mathcal{L}_{\mathrm{misalign}}$ (Ours) & 19.6 & \textbf{22.0}  & 23.3 & 9.82 & 8.62  & 8.29\\
    \bottomrule
    \end{tabular}
\end{table}

\begin{table}[t]
    \caption{Performance of the finetuned models using only cross-entropy loss (baseline) and an additional long-short misalignment loss on long-context modeling benchmark, LongBench-E score \citep{bai2023longbench} and perplexity on the 8k-length validation set. The fine-tuning sequence length is 8k. We adopt two kinds of model adjustments: LongQLora \citep{yang2023longqlora} and EABF \citep{zhang2024extending}. The models finetuned with our proposed loss outperform the baseline across different model adaption strategies. The comparison is based on fixed total computation time. $(a/b)$ in the training steps mean $a$ steps for the baseline and $b$ steps for our method.}
    \label{tab:long-context-time}
    \centering
    \begin{tabular}{lcccccc}
    \toprule
       Benchmark  & \multicolumn{3}{c}{LongBench-E ($\uparrow$)} & \multicolumn{3}{c}{Perplexity ($\downarrow$)}\\
       Training steps  &   50/47 & 100/95 &  200/190 & 50/47 & 100/95 & 200/190\\
    \midrule
    \multicolumn{3}{l}{\textit{LongQLora}}\\
    $\mathcal{L}_{\mathrm{train}}$ (Baseline) & \textbf{21.9} & 22.1 &  23.4 & 6.82 & \textbf{6.41} & 5.82 \\
    $\mathcal{L}_{\mathrm{train}}+0.1\mathcal{L}_{\mathrm{misalign}}$ (Ours) & 21.6 & 23.1 & \textbf{25.7} & \textbf{6.79} & 6.42 & \textbf{5.74}\\
    $\mathcal{L}_{\mathrm{train}}+0.5\mathcal{L}_{\mathrm{misalign}}$ (Ours) & 21.1 & \textbf{23.6} & 24.9 & 7.19 & 6.65 & 5.96\\
    \midrule
    \multicolumn{3}{l}{\textit{EABF}}\\
    $\mathcal{L}_{\mathrm{train}}$ (Baseline) & 22.1 & 22.9 & 23.6 & \textbf{6.89} & 6.52 & 6.01\\
    $\mathcal{L}_{\mathrm{train}}+0.1\mathcal{L}_{\mathrm{misalign}}$ (Ours) & \textbf{23.0} & \textbf{23.6} & \textbf{24.5} & 7.01 & \textbf{6.48} & \textbf{5.86}\\
    $\mathcal{L}_{\mathrm{train}}+0.5\mathcal{L}_{\mathrm{misalign}}$ (Ours) & 22.2 & 23.1 & 23.8 & 7.32 & 6.88 & 6.42\\
    \bottomrule
    \end{tabular}
\end{table}

\section{Additional Experiment on Mean Prediction Task with a Different Dataset Setting}
     {In the synthetic experiments in Section \ref{sec-case-study}, 0 and 1 have an equal probability. The mean value of 50 such samples will nearly obey the normal distribution $\mathcal{N}(0.5, 0.005)$. This means predicting 0.5 under a length of 50 would yield an approximate test loss of 0.005. However, we would like to clarify that the test loss of NoPE remains around 1e-5 (indicated by the orange line in Figure 1(a)), which is two orders of magnitude smaller than 0.005. This indicates that the model predicts the mean values of the sequences with high precision, rather than simply guessing a fixed number. Therefore, the conclusion that the model can have good length generalization ability in the mean prediction task is reasonable.}

     {Besides, in order to avoid a trivial solution of predicting 0.5, we conduct an additional experiment on the mean prediction task. To build the sample of length $l$, we first sample the number of 1 uniformly from $[0, l]$ and then randomly build the sequence. In this way, the mean value of the sequence uniformly spans from 0 to 1, avoiding trivial prediction. The experimental results are shown in Figure \ref{fig:mean-model2}, where the model can still achieve good length generalization, consistent with our previous results.}

     \begin{figure}
         \centering
         \includegraphics[width=0.5\linewidth]{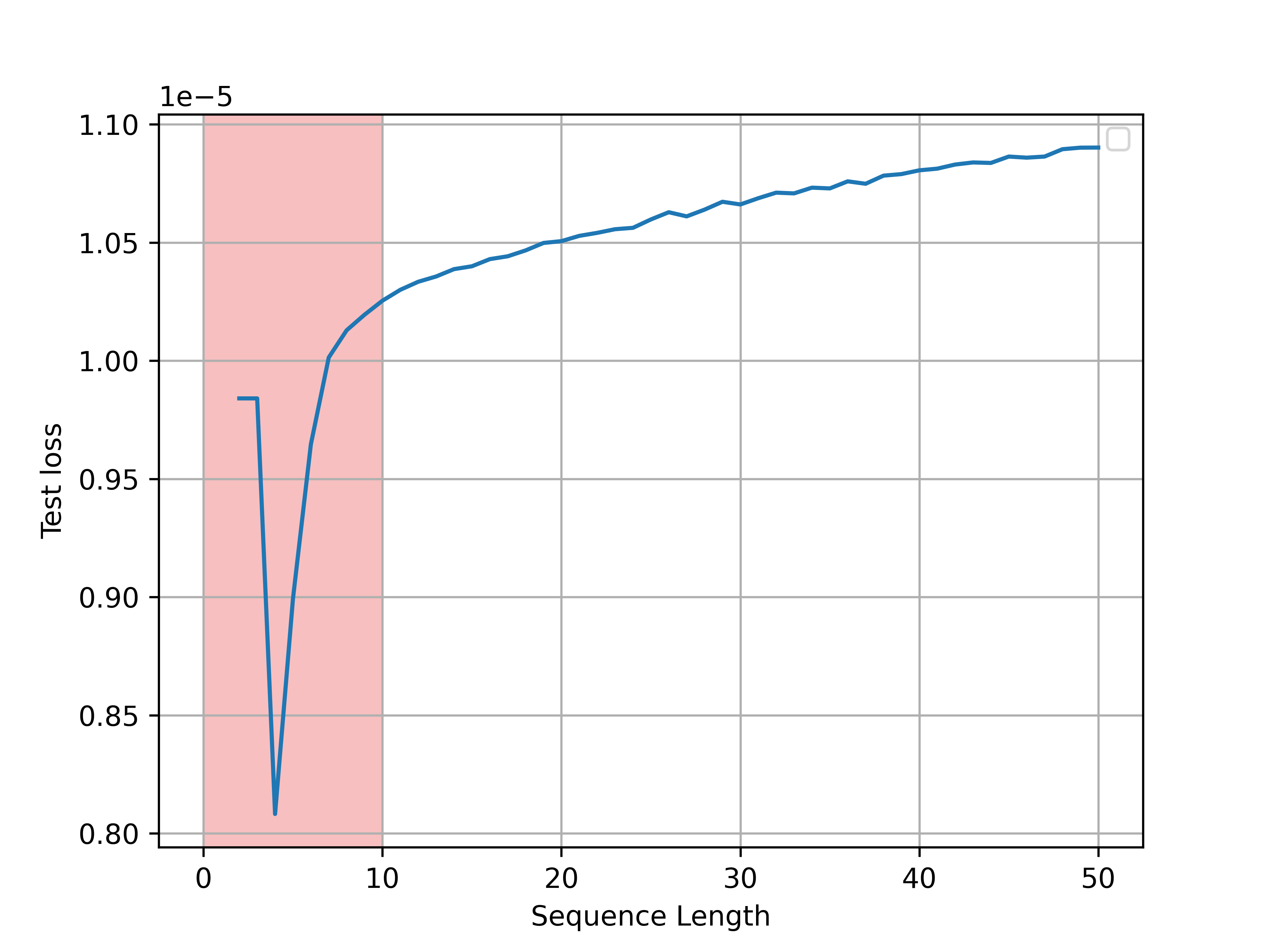}
         \caption{Length generalization in the mean prediction task with a different dataset setting. To build the sample of length $l$, we first sample the number of 1 uniformly from $[0, l]$ and then randomly build the sequence. In this way, the mean value of the sequence uniformly spans from 0 to 1, avoiding trivial prediction. The model can still achieve good length generalization, which is consistent with our previous results.}
         \label{fig:mean-model2}
     \end{figure}

\end{document}